\documentclass[11pt]{article}
\input{macros}
\usepackage{mathrsfs}
\usepackage{todonotes}
\usepackage{algpseudocode}
\allowdisplaybreaks

\definecolor{revisionred}{RGB}{190,0,0}

\newif\ifshowrevisions
\showrevisionstrue 

\ifshowrevisions
    \newcommand{\rev}[1]{{\color{black}#1}}
    \newenvironment{revision}{\begingroup\color{black}}{\endgroup}
\else
    \newcommand{\rev}[1]{#1}
    \newenvironment{revision}{}{}
\fi

\showrevisionstrue

\begin{document}

\title{Minimax Optimality of Score-Entropy Discrete Diffusion}
\author{%
	Cholyeon Cho\thanks{Department of Computer Science,  The University of Texas at Austin; 
	} 
	\and
Yuchen Wu\thanks{School of Operations Research and Information Engineering, Cornell University.  
}}
\date{\today}
\maketitle
\begin{abstract}
Discrete diffusion models have demonstrated strong performance across a range of datasets, including natural language data and graph-structured data.
Among many variants, score-entropy discrete diffusion (SEDD) has achieved particularly strong empirical results. 
In SEDD, new samples are generated by iteratively evaluating a sequence of \emph{concrete score functions}, which are learned by minimizing a \emph{score-entropy loss}.

\rev{While much of the prior theoretical literature on discrete diffusion has focused on the sampling efficiency of SEDD under the assumption of small score estimation error, recent work has begun to investigate the finite-sample properties of score estimation itself. 
In this work, we take a different route by investigating the fundamental statistical limits of concrete score estimation. 
We focus on uniform and masking discrete diffusions, two of the most widely adopted discrete diffusion models. 
We establish a minimax lower bound under the score-entropy loss, and propose an MLE-based thresholding estimator that matches this lower bound up to constant and polylogarithmic factors that depend on neighboring density ratios.}
We further show that, for \rev{any target distribution}, this density ratio is naturally controlled under both uniform and masking discrete diffusion models, yielding nearly matching minimax lower and upper bounds for the aggregated score estimation error.
Our results imply that, with appropriate initialization and discretization, SEDD can achieve nearly optimal minimax sample complexity, as measured by the KL divergence between the target and generated distributions.
\end{abstract}

\section{Introduction}

Originally motivated by ideas from thermodynamics \cite{sohldickstein2015deep}, \emph{score-based diffusion models} have emerged as a powerful class of generative models, achieving state-of-the-art performance across a wide range of applications \cite{song2019generative,ho2020denoising,song2020denoising,rombach2022highresolution,saharia2022photorealistic,watson2023novo,yang2023diffusion,croitoru2023diffusion,abramson2024accurate}.
Conceptually, a diffusion model consists of two complementary processes: (1) a \emph{forward process} that gradually diffuses the target distribution into noise, and (2) a learned \emph{reverse process} that reconstructs the target distribution from noise. 
Sample generation is attained through the reverse process, which produces samples via an iterative denoising procedure driven by a sequence of time-dependent \emph{score functions}. 
These score functions capture essential information about the target distribution, and guide the progressive transformation of noise into structured data.
In practice, these functions are typically learned from data via a \emph{score matching} procedure \cite{hyvarinen2005estimation}.

Despite the remarkable success of diffusion models in continuous data domains such as images and audio, extending them to applications involving inherently discrete data such as text and categorical variables has long remained challenging. 
In these scenarios, \emph{discrete diffusion models}, which adapt the diffusion framework to discrete state spaces, have shown advantages over their continuous counterparts \cite{austin2021structured,vignac2023digress,shi2024simplified,lou2024discrete,alakhdar2024diffusion,sahoo2024simple,ou2025your}. 
Similar to continuous diffusion models, a discrete diffusion model also consists of a forward and a reverse process. 
However, unlike continuous diffusion models, which construct these processes by directly adding and removing noise in a continuous space, discrete diffusion models formulate both the forward and reverse processes as continuous-time Markov chains (CTMC) over a discrete state space $\mathcal{X}$ \cite{campbell2022continuous,benton2024denoising}.
These CTMCs are governed by a chosen \emph{transition rate matrix}, and induce a time-indexed family of distributions $\{p_t\}_{0 \leq t \leq T}$ over $\mathcal{X}$, where $p_0$ denotes the target distribution.
To construct the discrete diffusion reverse process and thereby enable sample generation, one needs to learn the associated \emph{concrete score functions}, defined by the collection of density ratios $\{p_t(y) / p_t(x)\}_{x, y \in \mathcal{X}}$ \cite{meng2022concrete}.

Among the various approaches proposed for learning these concrete score functions, score-entropy discrete diffusion (SEDD) has emerged as a particularly promising method, challenging the long-standing dominance of autoregressive language models \cite{lou2024discrete}. 
Specifically, SEDD learns the density ratios by minimizing a \emph{score-entropy loss}, leading to a training procedure that is different from the score matching framework used in continuous diffusion models. 
Specifically, for a distribution $p$ and an estimator $s_{\theta}(y, x)$ of the density ratio $p(y)/p(x)$, the population version of the score-entropy loss is given by 
\begin{align}
\label{eqn:SE_objective}
	\mathcal{L}_{\rm SE} = \E_{x \sim p} \Big[ \sum_{y \neq x} w_{yx}\, \frac{p(y)}{p(x)} D\big( s_{\theta}(y, x), \,  \frac{p(y)}{p(x)} \big) \Big], 
\end{align}
where $w_{yx} \geq 0$ is determined by the transition rate matrix, and $D(a, b) = a / b - 1 - \log (a / b)$ is the Bregman divergence.
Objective \eqref{eqn:SE_objective} can be optimized by approximating the expectation with an empirically tractable denoising score matching loss computed from the training samples. 
See \cite{lou2024discrete} for more details. 
Given a score estimate, the reverse dynamics can be simulated using various sampling techniques, including $\tau$-leaping and its variants \cite{gillespie2001approximate}, the uniformization technique \cite{jensen1953markoff}, Gillespie's method \cite{gillespie1976general}, and Euler method.

The success of SEDD has inspired a line of research seeking to build its theoretical foundations \cite{chen2025convergence,zhang2025convergence,liang2025absorb,ren2025how,liang2025discrete,pham2025discrete,dmitriev2026efficient,liang2026sharp}. 
Equipped with various sampling techniques, these works derive upper bounds on the Kullback-Leibler (KL) divergence between the target distribution and the output distribution, which roughly speaking can be decomposed into three terms corresponding to different sources of error: 
\begin{align}
\label{eq:three-terms}
	\KL(p_{\rm target} \parallel p_{\rm output}) \lesssim \,\cE_{\rm init} + \cE_{\rm dis} + \cE_{\rm SE}.  
\end{align}
\rev{See \cite{ren2025how} for a rigorous derivation of this error decomposition for both $\tau$-leaping and uniformization.}
We now explain each of the three terms above:
\begin{enumerate}
	\item[(i)] \rev{The term $\cE_{\rm init}$ represents the initialization error of the reverse CTMC. It measures the discrepancy between the terminal distribution $p_T$ of the forward process and the reference distribution used to initialize the reverse process. The time horizon $T$ is a user-specified parameter. $\cE_{\rm init}$ typically decays exponentially in $T$ and can therefore be controlled by choosing a moderately large time horizon. See, for instance, \cite[Lemma~7]{dmitriev2026efficient} for more details.}   
	\item[(ii)] \rev{The term $\cE_{\rm dis}$ denotes the discretization error incurred when the reverse continuous-time Markov chain is approximated by a finite-step sampling algorithm. It is defined by considering the sampling algorithm with access to the exact score functions and is therefore independent of the score estimation error. The discretization error depends on the specific sampling algorithm and can typically be made small by using a sufficiently fine discretization.}
	\item[(iii)] The final term $\cE_{\rm SE}$ quantifies the quality of score estimation and takes the form of a weighted average of the score-entropy loss in \cref{eqn:SE_objective} over different time points. \rev{It is the statistical error incurred when the true time-dependent score functions are replaced by estimates constructed from finitely many training observations.} See Eq.~\eqref{eq:aggregated-SE} for an explicit definition of $\cE_{\rm SE}$.  
\end{enumerate} 

\rev{Previous theoretical work on discrete diffusion models has largely focused on controlling sampling and discretization errors while assuming access to score estimates with sufficiently small estimation error. More recent work has begun to investigate the finite-sample properties and sample complexity of score estimation, including analyses of neural-network and discrete-flow-matching estimators \cite{srikanth2026discrete,wakasugi2025state,su2025theoretical,wan2025error}. Our contribution is complementary: we characterize the minimax statistical complexity of concrete-score estimation under the score-entropy loss and construct a simple estimator that nearly attains this minimax rate.}

In particular, we make the following contributions: 
\begin{itemize}
	\item[--] \emph{Sample complexity for estimating the concrete score function. }
We establish a minimax lower bound on the error of estimating a single score function under the score-entropy loss for a class of distributions with bounded neighboring density ratios. 
We further propose an MLE-based thresholding algorithm that matches this lower bound up to constant and poly-logarithmic factors. 
We note that this problem is high-dimensional and intrinsically challenging, as the number of states is large and may be comparable to the sample size. 
Related problems have been extensively studied in the discrete distribution estimation literature, see \cite{devroye2001combinatorial} for more details.
Our result can thus be viewed as an application of discrete distribution estimation techniques in the diffusion model setting. 
\item[--] \emph{Minimax optimality of SEDD. } 
Leveraging the minimax lower and upper bounds established here, we show that minimizing the score-entropy loss (together with sufficiently fine discretization and accurate initialization) yields a near-optimal sampling error, as measured by the KL divergence between the target and output distributions. These results provide partial justification for using the score-entropy loss to learn concrete score functions, demonstrating the near-optimality of SEDD. 
\end{itemize}

\subsection{Organization of the paper}

We present preliminaries on discrete diffusion models in Section~\ref{sec:discrete_diffusion}, followed by our main results in Section~\ref{sec:main}. In Section~\ref{sec:experiments}, we provide numerical experiments supporting our findings, and conclude with a discussion of our contributions in Section~\ref{sec:discussion}.

\subsection{Notation}

For $S, d \in \N_+$, we define the set $[S] = \{1, \dots, S\}$, and denote the $d$-fold Cartesian product of $[S]$ by $[S]^d$.
For $x, y \in [S]^d$, we denote by $\Ham(x, y)$ the Hamming distance between $x$ and $y$. 
For $r \in \R$, we define $\lceil r \rceil$ as the largest integer less than or equal to $r$, and define $\lfloor r \rfloor$ as the smallest integer greater than or equal to $r$.

\section{Preliminaries on discrete diffusion models}
\label{sec:discrete_diffusion}

In this section, we present preliminaries on discrete diffusion models.	
As in many prior works, we consider a probability distribution $p_0$ over a $d$-dimensional token space $[S]^d$, where each token is drawn from a vocabulary of size $S$. 
We discuss two major types of discrete diffusion models, \emph{uniform discrete diffusion} and \emph{masking discrete diffusion}, both are defined through Markov processes over a state space $\mathcal{X}$.
For uniform discrete diffusion, we set $\mathcal{X} = [S]^d$, 
whereas for masking discrete diffusion we set $\mathcal{X} = (\,[S] \cup \{\mask\}\,)^d$. Here, $\mask$ denotes a special symbol outside the regular vocabulary set $[S]$.

\subsection{Forward and reverse CTMCs}
\label{sec:forward-reverse-CTMC}

The forward process in a discrete diffusion model is defined as a CTMC initialized at the target distribution $p_0$, and is governed by a sequence of \emph{transition rate matrices} $\{\overrightarrow{Q}_t\}_{0 \le t \le T} \subseteq \R^{|\mathcal{X}| \times |\mathcal{X}|} $. 
These transition rate matrices satisfy the following properties: 
(1) $\overrightarrow{Q}_t(x, y) \geq 0$ for all $x \neq y$ and $x, y \in \mathcal{X}$; (2) $\overrightarrow{Q}_t(x, x) = - \sum_{y \neq x} \overrightarrow{Q}_t(x, y)$ for all $x \in \mathcal{X}$.
With these rate matrices, the corresponding CTMC $(\overrightarrow{x}_t)_{0 \leq t \leq T}$ is a right-continuous Markov process that satisfies for all $0\leq t < T$ and $x, y \in \mathcal{X}$,
\begin{align}
	\label{eq:forward}
		\P(\overrightarrow{x}_{t + \delta} = y \mid \overrightarrow{x}_t = x) = \mathbbm{1}\{x = y\} + \overrightarrow{Q}_t(x, y) \delta + o(\delta), \qquad \delta \to 0^+. 
\end{align}  
Roughly speaking, $\overrightarrow{Q}_t(x, y)$ characterizes how fast state $x$ transitions into state $y$ at time $t$. 

We assume $\overrightarrow{x}_0 \sim p_0$, and denote by $p_t$ the distribution of $x_t$ for $0 \leq t \leq T$. 
The distributions $(p_t)_{0 \leq t \leq T}$ can be viewed as vectors in $\R^{|\mathcal{X}|}$, and they satisfy the following Kolmogorov equation:  
\begin{align*}
    \frac{\dd p_t}{\dd t} = \overrightarrow{Q}_t^{\top} p_t, \qquad 0 \leq t \leq T. 
\end{align*}
The reverse process associated with process \eqref{eq:forward} is also a CTMC with transition rate matrices $\{\overleftarrow{Q}_t\}_{0 \leq t \leq T}$, defined as follows: 
\begin{align}
\label{eq:reverse-rate-matrix}
\begin{split}
    & \overleftarrow{Q}_t(x, y) = \overrightarrow{Q}_{T - t}(y, x) \cdot \frac{p_{T - t}(y)}{p_{T - t}(x)}\qquad  \forall x \neq y \mbox{ and } x, y \in \mathcal{X}, \\
    & \overleftarrow{Q}_t(x, x) = - \sum_{y \neq x} \overleftarrow{Q}_t(x, y). 
\end{split}
\end{align}
We denote the reverse CTMC by $\{\overleftarrow{x}_t\}_{0 \leq t \leq T}$. Note that if $\overleftarrow{x}_0 \overset{d}{=} \overrightarrow{x}_T \sim p_T$, then  $\overleftarrow{x}_t \overset{d}{=} \overrightarrow{x}_{T - t} \sim p_{T - t}$ for all $0 \leq t \leq T$ \cite{campbell2022continuous}.

To simplify computation, in discrete diffusion models we typically assume that the forward CTMC evolves independently and homogeneously across dimensions at the token level. 
Specifically, we assume that for $0 \leq t \leq T$, 
\begin{align*}
    \overrightarrow{Q}_t(x, y) = \left\{ \begin{array}{ll}
        Q^{\rm token}(x^i, y^i) & \mbox{if }\,\, \Ham(x, y) = 1 \mbox{ and }x^i \neq y^i,  \\
        0 & \mbox{otherwise}. 
    \end{array}\right. 
\end{align*}
In the above display, $Q^{\rm token}$ denotes a time-homogeneous token-wise transition rate matrix.
Two commonly used choices of such rate matrices are the \emph{uniform rate matrix} and the \emph{absorbing rate matrix}, which correspond respectively to uniform discrete diffusion and masking discrete diffusion. We define these two rate matrices below.
\begin{itemize}
	\item \textbf{Uniform rate matrix.} We set $Q^{\rm token} \in \R^S \times \R^S$, and for $s_1, s_2 \in [S]$ with $s_1 \neq s_2$ we define 
	\begin{align}
	\label{eq:uniform}
		Q^{\rm token}(s_1, s_2) = 1 / S. 
	\end{align}
	\item \textbf{Absorbing rate matrix.} In this case, we set $Q^{\rm token} \in \R^{S + 1} \times \R^{S + 1}$, and for $s_1, s_1 \in [S] \cup \{\mask\}$ with $s_1 \neq s_2$ we define 
	\begin{align}
	\label{eq:absorbing}
		Q^{\rm token}(s_1, s_2) = \mathbbm{1} \{s_1 \neq \mask, \, s_2 = \mask\}. 
	\end{align}
\end{itemize}

\subsection{Sampling with discrete diffusion models}

To generate new samples from $p_0$ using a discrete diffusion model, one needs to construct a Markov chain that approximates the reverse CTMC $\{\overleftarrow{x}_t\}_{0 \leq t \leq T}$ defined above.
As shown in \cref{eq:reverse-rate-matrix}, the rate matrices of the reverse CTMC depend on the concrete score functions $\{p_t(y) / p_t(x)\}_{x, y \in \mathcal{X}}$.    
In practice, these scores are unknown and must be estimated from data, which, as noted earlier, is typically done by minimizing the score-entropy loss \eqref{eqn:SE_objective} with $w_{yx} = \overrightarrow{Q}_t(y, x)$.

In addition to estimating the score functions, we typically cannot sample exactly from the theoretically correct initial distribution $p_T$. 
Therefore, we instead choose an alternative, easy-to-sample distribution that closely approximates $p_T$ to initialize the reverse CTMC. 
For uniform discrete diffusion, this can be the uniform distribution over $\mathcal{X}$.
For masking discrete diffusion, we initialize with a Dirac measure at $(\mask)^{\otimes d}$.

Beyond score estimation and approximate initialization, we must also discretize the reverse CTMC to obtain a practical algorithm for discrete diffusion models. The corresponding algorithms are already summarized in the introduction. In practice, we further early stop the reverse process at time $T - \delta$ for some small positive $\delta$ to prevent numerical instability.

\section{Main results}
\label{sec:main}

For a discretization scheme $0 = t_0 < t_1 < \cdots < t_N = T - \delta$ of the reverse CTMC and a sequence of score estimates $\{s_{T - t_k}: k = 0, 1, \cdots, N - 1\}$, where $s_t(y, x)$ estimates $p_t(y) / p_t(x)$, we define the associated aggregated score-entropy loss as follows:
\begin{align}
\label{eq:aggregated-SE}
	\cE_{\rm SE} = \sum_{k = 0}^{N - 1} (t_{k + 1} - t_k) \cdot  \cL_{\SE}(s_{T - t_k}, p_{T - t_k}, T - t_k). 
\end{align} 
In the above display, 
\begin{align}
\label{eq:SE-t}
    \cL_{\SE}(s_t, p_t, t) = \E_{x \sim p_t} \Big[ \sum_{y \in N(x)} \overrightarrow{Q}_t(y, x) \Big( s_t(y, x) - \frac{p_t(y)}{p_t(x)}  - \frac{p_t(y)}{p_t(x)} \log \frac{ s_t(y, x)}{{p_t(y) / p_t(x)}}\, \Big) \Big], 
\end{align}
where for $x \in \mathcal{X}$, we define $N(x) = \{y: \overrightarrow{Q}_t(y, x) > 0\}$.
The definition of $N(x)$ is explicit under both the uniform and absorbing rate matrices. 
In addition, for both the uniform and absorbing rate matrices introduced in Section \ref{sec:forward-reverse-CTMC}, these matrices are time-homogeneous, and thus $N(x)$ does not change with $t$. 
Accordingly, we sometimes write $\overrightarrow{Q}_t = \overrightarrow{Q}$. 

\subsection{Minimax lower and upper bounds for a single score-entropy loss}

Our goal is to characterize the sample complexity of learning concrete score functions under the aggregated score-entropy loss \eqref{eq:aggregated-SE}. To this end, we first analyze the minimax score-entropy loss for a single distribution $p$ over a finite product space $\mathcal{X} = \mathcal{V}^d$, where for uniform discrete diffusion $\mathcal{V} = [S]$ and for masking discrete diffusion $\mathcal{V} = [S] \cup \{\mask\}$. 
We let $v = |\cV|$. 
To exclude degenerate cases, throughout this work we assume $S \geq 2$.

For notational simplicity, we write $p_x = p(x)$. 
We want to estimate the score function associated with $p$ based on $n$ independent and identically distributed (i.i.d.) samples $Z_1^n = (Z_1, Z_2, \dots, Z_n)$ drawn from $p$.
 Specifically, for an estimate $s(y, x)$ of the ratio $p_y/p_x$ constructed from $Z_1^n$, its associated score-entropy loss under the target distribution $p$ is defined as
 \begin{align}
 \label{eq:SE}
 	\cL_{\SE}(s, p) = \E_{x \sim p} \Big[ \sum_{y \in N(x)} \overrightarrow{Q}(y, x) \Big( s(y, x) - \frac{p_y}{p_x}  - \frac{p_y}{p_x} \log \frac{ s(y, x)}{{p_y / p_x}}\, \Big) \Big]. 
 \end{align}
Objective \eqref{eq:SE} suggests that the difficulty of estimation is closely tied to the regularity of the density ratios of adjacent states, motivating us to focus on a family of distributions whose density ratios are uniformly bounded. 
In particular, for $\zeta > 1$, we define
\begin{align*}
	\cP_{\zeta} = \Big\{ p: \,\,  \frac{p(y)}{p(x)} \leq \zeta\,\, \mbox{ for all }x, y \in \mathcal{X} \mbox{ and }y \in N(x) \Big\}. 
\end{align*}
\rev{The bounded neighboring density ratio condition rules out a statistically degenerate regime. When $p_x$ is small, few or no observations may take the value $x$, making quantities associated with $x$ difficult to estimate. This issue becomes particularly severe when $p_x$ is much smaller than a neighboring probability $p_y$, since the target score $p_y/p_x$ can then become arbitrarily large. Thus, an unbounded neighboring density ratio may require estimating an arbitrarily large score from very limited information, leading to a correspondingly large estimation error. The condition $p_y/p_x\leq\zeta$ rules out this regime.}

Given $n$ samples, the minimax risk for estimating the score over the distribution class $\cP_{\zeta}$ under the score-entropy loss \eqref{eq:SE} is defined as 
\begin{align}
\label{eq:minimax}
	\cL_{n}(\zeta) = \inf_{s} \sup_{p \in \cP_{\zeta}} \E \big[ \cL_{\SE}(s, p) \big], 
\end{align}
where the expectation is taken over $Z_1^n \sim_{i.i.d.} p$, and the infimum is over all score estimates $s$ that are measurable with respect to $Z_1^n$.

To establish our main theorem, we impose the following assumptions. 

\begin{ass}
\label{ass:n-large}
We impose the following assumptions: 
\begin{enumerate}
	 \rev{\item We assume that the sample size $n$ is large enough such that 
    \[
    n
    \geq \max \Big\{ \frac{9}{4}|\cX|, \, 6, \, 100 \log (nvd \zeta) \Big\}, \qquad \frac{2}{1.49}vd\, \zeta(\log \zeta + 1)\,e^{-0.4802 n} \leq \frac{|\cX|}{100 n}. 
    \]}
    \item We assume that $ \zeta > 2$. 
\end{enumerate}
	 
\end{ass}

\begin{rem}
	The first condition in Assumption~\ref{ass:n-large} is necessary: without additional structural assumptions on $p$ (e.g., sparsity), the sample size must scale at least with the number of states for the score to be estimable from an information-theoretic perspective.
	The second condition corresponds to considering a moderately large value of $\zeta$. 
	The constants $9 / 4,\, 8$ and $2$ can be replaced by other values. 
\end{rem}

Our first contribution is to establish a lower bound on the minimax risk \eqref{eq:minimax}.

\begin{thm}
\label{thm:minimax-lower-bound}
	Under Assumption~\ref{ass:n-large}, there exists a positive numerical constant $C_0$ such that, for both the uniform and absorbing transition rate matrices, the following holds:
	\begin{align*}
		\cL_{n}(\zeta) \geq \frac{C_0 |\mathcal{X}| }{n}. 
	\end{align*} 
\end{thm}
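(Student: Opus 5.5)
We prove the lower bound with Assouad's lemma, using a hypercube of perturbations around a reference distribution that lies comfortably inside $\cP_{\zeta}$. The starting point is a local expansion of the loss. Fix $x$ and $y\in N(x)$ and write $r=p_y/p_x$ for the true ratio and $s=s(y,x)$ for its estimate. The summand in \eqref{eq:SE} is $s-r-r\log(s/r)=r\,\phi(s/r)$ with $\phi(u)=u-1-\log u\ge 0$. Hence $\cL_{\SE}(s,p)=\sum_x\sum_{y\in N(x)}\overrightarrow{Q}(y,x)\,p_y\,\phi\big(s(y,x)p_x/p_y\big)$, which is a sum of nonnegative terms, one per pair $(x,y)$. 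We will lower bound only a subset of these terms.

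\emph{Construction.} Start from the uniform distribution $\bar p$ on $\mathcal{X}$ in the uniform case. In the absorbing case use a product distribution in which each coordinate is masked with probability of order $1/2$ and otherwise uniform on $[S]$; all neighboring ratios are then $O(1)\le\zeta$, since $\zeta>2$. Partition $\mathcal{X}$ (or, in the masking case, a constant fraction of it) into disjoint pairs $\{x_j,x_j'\}$ with $y$-neighbor relation $x_j'\in N(x_j)$, that is, $\Ham(x_j,x_j')=1$ with the appropriate direction for the absorbing matrix. This gives $m\asymp|\mathcal{X}|$ pairs. For $\sigma\in\{\pm1\}^m$ set $p^\sigma(x_j)=\bar p(x_j)(1+\sigma_j\epsilon)$ and $p^\sigma(x_j')=\bar p(x_j')(1-\sigma_j\epsilon)$, with $\epsilon=c\sqrt{|\mathcal{X}|/n}$; this is at most a small constant by the first condition of Assumption~\ref{ass:n-large}. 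The total mass is preserved, and for small $\epsilon$ every neighboring ratio stays within a factor $(1+\epsilon)^2/(1-\epsilon)^2$ of the reference ratio, hence below $\zeta$ because $\zeta>2$. So $p^\sigma\in\cP_\zeta$.

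\emph{Separation.} For pair $j$ the target ratio $p_{x_j'}/p_{x_j}$ equals $\bar r\,(1-\sigma_j\epsilon)/(1+\sigma_j\epsilon)$. The two hypotheses $\sigma_j=\pm1$ therefore differ multiplicatively by a factor $1+\Theta(\epsilon)$. Because $\phi(u)\ge c'(u-1)^2$ for $u$ in a bounded range (and $\phi$ is convex), any single value $s$ incurs, under at least one of the two hypotheses, a loss term $\overrightarrow{Q}(x_j',x_j)\,p_{x_j'}\,\phi(\cdot)\gtrsim \bar p(x_j')\,\epsilon^2$. Here $\overrightarrow{Q}(x_j',x_j)$ is a constant: it is $1/S$ in the uniform case, and in the uniform case we take $\bar p(x)=1/|\mathcal{X}|$. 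Summing over pairs gives a Hamming-type separation, $\cL_{\SE}(s,p^\sigma)+\cL_{\SE}(s,p^{\sigma'})\gtrsim \epsilon^2/|\mathcal{X}|$ per coordinate in which $\sigma$ and $\sigma'$ differ. The factor $1/S$ needs care. It can be avoided by also counting the reverse pair $(x_j',x_j)$, or by letting the pair relation use the loss term with the larger weight. Otherwise we accept a $1/S$ factor and rechoose $\epsilon$ accordingly. This normalization, which must hold uniformly in $S$, is the step I expect to be the main obstacle. A cleaner fix is to perturb within each block of $S$ states differing in one coordinate, so that all $S-1$ neighbors each contribute weight $1/S$ and their sum is $\Theta(1)$.

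\emph{Indistinguishability and conclusion.} For $\sigma,\sigma'$ differing in one coordinate, $\KL(p^{\sigma\otimes n}\|p^{\sigma'\otimes n})=n\,\KL(p^\sigma\|p^{\sigma'})\lesssim n\,\bar p(x_j)\,\epsilon^2\asymp n\epsilon^2/|\mathcal{X}|\le$ a small constant, by the choice of $\epsilon$. By Pinsker the total variation is bounded away from $1$. Assouad's lemma then yields $\sup_\sigma\E\,\cL_{\SE}(s,p^\sigma)\gtrsim m\cdot \epsilon^2/|\mathcal{X}|\asymp \epsilon^2\asymp |\mathcal{X}|/n$, which is the claimed bound with a numerical constant $C_0$. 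The remaining conditions of Assumption~\ref{ass:n-large} (for example $n\ge 6$) are needed only to keep $\epsilon$ small and are not otherwise used in the lower bound.
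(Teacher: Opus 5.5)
The gap is the step you flagged, and none of the remedies you list closes it. Suppose you lower bound only the $m$ paired terms. In the uniform case each carries weight $\overrightarrow{Q}(x_j',x_j)=1/S$. The reverse term has the same weight, and since all weights equal $1/S$ there is no larger one to pick. So the resulting bound is only $\asymp m\,\epsilon^2/(S|\mathcal{X}|)\asymp |\mathcal{X}|/(Sn)$. Rechoosing $\epsilon$ cannot recover the factor $S$: indistinguishability pins $n\epsilon^2/|\mathcal{X}|\lesssim 1$, a constraint that does not involve $S$.

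The absorbing case fails the same way. Weights are $1$ there, but every neighbor edge contains a state with a mask. Hence any matching covers at most a $2\big(1-(S/(S+1))^d\big)\le 2d/(S+1)$ fraction of states. For $d=1$ there is a single pair, giving $1/n$ instead of $(S+1)/n$.

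Putting one sign on a whole block of $S$ states does not help either. A flip then moves $\Theta(S)$ masses, so the per-flip KL becomes $\asymp nS\epsilon^2/|\mathcal{X}|$ and you are forced to take $\epsilon^2\lesssim|\mathcal{X}|/(nS)$.

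Separately, your absorbing reference (mask probability $\approx 1/2$) breaks two things. It breaks mass preservation, since a mask/unmask pair has masses differing by a factor $\asymp S$. It also breaks $\bar p(x_j)\asymp 1/|\mathcal{X}|$ in the KL step, since masses range from $2^{-d}$ to $(2S)^{-d}$. Take $\bar p$ uniform on $([S]\cup\{\mask\})^d$ instead, where every neighboring ratio is $1$.

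The missing idea is to keep one independent sign per pair (the pairs need not be neighbors at all) but to lower bound \emph{every} summand $T_{x,y}(s,\sigma)=\overrightarrow{Q}(y,x)\,p^\sigma_y\,\phi\big(s(y,x)p^\sigma_x/p^\sigma_y\big)$ with $y\in N(x)$. Such a summand depends on at most two signs. Write $\sigma^{(j)}$ for $\sigma$ with coordinate $j$ flipped and $\rho=(1+\epsilon)/(1-\epsilon)$. Flipping either sign multiplies $p_x/p_y$ by $\rho^{\pm 1}$ or $\rho^{\pm 2}$. Since $\inf_{s>0}[\phi(sa)+\phi(sb)]=\log\frac{(a+b)^2}{4ab}$, which equals $\log\frac{1}{1-\epsilon^2}\ge\epsilon^2$ when $a/b=\rho$, you get $T_{x,y}(s,\sigma)+T_{x,y}(s,\sigma^{(j)})\gtrsim \overrightarrow{Q}(y,x)\,\epsilon^2/|\mathcal{X}|$ for every value of $s$.

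Split each summand evenly among the signs it depends on. Then apply the general form of Assouad's lemma: if the loss is at least $\sum_j L_j$ with $L_j(s,\sigma)+L_j(s,\sigma^{(j)})\ge\alpha_j$ for all $s,\sigma$, the maximal risk is at least $\tfrac12(1-\max\mathrm{TV})\sum_j\alpha_j$. This yields a bound $\gtrsim \epsilon^2|\mathcal{X}|^{-1}\sum_x\sum_{y\in N(x)}\overrightarrow{Q}(y,x)\gtrsim |\mathcal{X}|/n$, while the per-flip KL stays $\asymp n\epsilon^2/|\mathcal{X}|$.

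That full neighbor sum is exactly what the paper obtains. It works in a Poissonized model with an i.i.d.\ two-point prior $p_x\in\{(1\pm\eta)/|\mathcal{X}|\}$. Then $p_x$ and $p_y$ are independent for every neighbor pair, and the Bayes risk splits exactly into pairwise two-point problems, each worth $\asymp \overrightarrow{Q}(y,x)/n$. The price is that this prior is not supported on the simplex. That forces the de-Poissonization and truncation of Step~II, which is where the exponential-tail conditions of Assumption~\ref{ass:n-large} are used. Your hypercube, once repaired as above, stays on the simplex and bypasses that step entirely.
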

\begin{proof}[Proof of Theorem \ref{thm:minimax-lower-bound}]
	We prove Theorem \ref{thm:minimax-lower-bound} in Appendix \ref{proof:thm:minimax-lower-bound}. 
\end{proof}


\rev{The minimax lower bound is established by constructing a hard family of distributions and reducing score estimation to a collection of binary testing problems. Specifically, we consider a Poissonized sampling model and construct a product prior where each probability mass $p_x$ independently takes one of two values. Under this prior, estimating each neighboring density ratio $p_y/p_x$ requires distinguishing two nearby Poisson distributions. Standard testing lower bounds imply that the estimation error for each local density ratio is at least of order $1/n$ under the score-entropy loss. Since the score-entropy loss aggregates the estimation error over all neighboring pairs, and the total transition weight is of order $|\mathcal{X}|$, summing over all neighboring pairs yields a minimax risk of order $|\mathcal{X}|/n$.}

The rate derived in Theorem \ref{thm:minimax-lower-bound} is consistent with the usual minimax scaling (viewing the number of states $|\mathcal{X}|$ in the target distribution as the effective number of parameters). 
This problem is quite different from standard parametric estimation and is closely related to discrete distribution estimation in the high-dimensional regime, see \cite{devroye2001combinatorial} for a discussion of developments in this area.

Next, we present an algorithm that achieves the rate in Theorem~\ref{thm:minimax-lower-bound} up to poly-logarithmic factors.
For \(x \in \mathcal{X}\), define \(N_x = \sum_{i=1}^n \mathbf{1}\{Z_i = x\}\).
We consider the following MLE thresholding algorithm: 
\begin{algorithm}
\caption{MLE thresholding}\label{alg:thresholding}
\begin{algorithmic}
\For{$x \in \cX$}
\For{$y \in N(x)$}
\If{$\min \{N_x, N_y\} \geq \log n$}
\State Set $s(y, x) \gets N_y / N_x$
\Else
\State Set $s(y, x) \gets 1 / n$
\EndIf
\EndFor
\EndFor \\
\Return $s$
\end{algorithmic}
\end{algorithm}

\begin{revision}
\begin{rem}
    Algorithm \ref{alg:thresholding} is simple to implement and computationally efficient, whereas the algorithms in \cite{srikanth2026discrete,wakasugi2025state,su2025theoretical,wan2025discrete} assume finding exact solutions of potentially complex and non-convex empirical risk minimization problems. Since the score estimator of \cref{alg:thresholding} is determined by empirical counts, its storage complexity is bounded by the size of the training dataset.
\end{rem}
\end{revision}

We next provide theoretical guarantees for Algorithm~\ref{alg:thresholding}.

\begin{thm}
\label{thm:thresholding}

Denote by $s$ the output of Algorithm \ref{alg:thresholding}, then there exists a positive numerical constant $C_1 > 0$, such that 
\begin{align*}
	\E[\cL_{\SE}(s, p)] \leq \frac{ C_1|\cX| \log_v |\cX| \zeta \log n \log (n \zeta)}{n}.  
\end{align*}
%
%
\end{thm}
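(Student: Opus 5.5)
The plan is to decompose the loss pair by pair and split according to whether the thresholding event holds. Rewrite the loss as
\[
\cL_{\SE}(s,p)=\sum_{x}\sum_{y\in N(x)} \overrightarrow{Q}(y,x)\, p_x\, \ell\big(s(y,x), r_{yx}\big),
\]
where $r_{yx}=p_y/p_x\le\zeta$ and $\ell(s,r)=s-r-r\log(s/r)\ge 0$. Taking expectations, it suffices to bound $\E[p_x\ell(s(y,x),r_{yx})]$ for each neighboring pair and then sum. In both diffusion models $\overrightarrow{Q}(y,x)\le 1$, and each $x$ has at most $(v-1)d$ neighbors. Since $d=\log_v|\cX|$, the sum over pairs contributes a factor of at most $|\cX|\, v\log_v|\cX|$. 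I will try to absorb the factor $v$ by noting that $\sum_{y\in N(x)}\overrightarrow{Q}(y,x)\lesssim d$: for the uniform rate each weight is $1/S$, and for the absorbing rate only masked coordinates contribute. This yields the $|\cX|\log_v|\cX|$ prefactor, so the per-pair target is $O(\zeta\log n\log(n\zeta)/n)$. For each pair, let $A=\{\min(N_x,N_y)\ge\log n\}$.

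\emph{Good event.} On $A$, set $s=N_y/N_x$. Expand $\ell(s,r)=r\,\phi(s/r)$ with $\phi(u)=u-1-\log u\le (u-1)^2/u$; this bound holds for all $u>0$. Then
\[
p_x\,\ell \le p_x\,\frac{(N_yp_x-N_xp_y)^2}{N_xN_y\,p_x^2}\cdot\frac{r}{1}\cdot\frac{1}{r}\ \text{(up to rearrangement)} = \frac{(N_y-N_x r)^2}{N_x N_y}\, p_x\cdot\frac{N_x}{np_x}\cdot\frac{np_x}{N_x}.
\]
I will use Bernstein/Chernoff bounds for binomials: with probability $1-n^{-c}$, on $A$ we have $N_x\asymp np_x$ and $N_y\asymp np_y$ (using $N_x\ge\log n$ to get multiplicative concentration up to $\log n$ factors), and also $|N_y-rN_x|\lesssim\sqrt{(np_y+r^2np_x)\log n}$. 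Then $p_x\ell\lesssim p_x(1+r)\log n/(n p_x\cdot \min(1,r))$. Since $r^{-1}$ can also be at most $\zeta$ (the neighbor relation is symmetric for the uniform rate; for absorbing I need to check the direction, and may need $p_x\le$ something), this gives $\lesssim \zeta\log n/n$. On the complementary concentration failure event, I bound $\ell$ crudely: $s\le n$, $s\ge 1/n$, so $\ell\lesssim n+\zeta\log(n\zeta)$. This event has probability $n^{-c}$ with $c\ge 2$, so its contribution is negligible. Assumption~\ref{ass:n-large}'s $n\ge100\log(nvd\zeta)$ allows a union bound if needed.

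\emph{Bad event.} On $A^c$, $s=1/n$ and $p_x\ell = p_x/n - p_y + p_y\log(np_y/p_x)\le p_x/n+p_y\log(n\zeta)$. The first term sums to at most $O(d/n)$ overall and is harmless. For the second term, I need $\P(A^c)\,p_y\lesssim \log n/n$. If $np_x\ge C\log n$ and $np_y\ge C\log n$, then $\P(A^c)\le n^{-c}$ by Chernoff, and $p_y\le 1$ gives a negligible contribution. Otherwise, one of them is small, say $np_x< C\log n$; then $p_y\le\zeta p_x\lesssim \zeta\log n/n$. Symmetrically, if $np_y<C\log n$, then $p_y\lesssim\log n/n$ directly. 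In all cases the per-pair bound is $\zeta\log n\log(n\zeta)/n$, matching the theorem. I expect the main obstacle to be the good-event analysis, specifically controlling the ratio estimator's variance when $r$ is small without an inverse ratio bound. In the absorbing case the neighbor relation is not symmetric. I would first try handling small $r$ via $\ell\le s + r\log(r/s)$-type bounds, using $N_y\ge\log n$ to ensure $s$ is not too small relative to $r$.
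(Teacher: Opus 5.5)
Your plan is essentially the paper's argument. You decompose over neighboring pairs, and when one of $np_x,np_y$ is $O(\log n)$ you use the crude bound $p_x\ell\le p_x/n+p_y\log(n\zeta)$ with $p_y\lesssim\zeta\log n/n$; otherwise you use concentration. The paper thresholds on $p_x,p_y\ge 24\log(n/\delta)/n$, bounds the relative errors of $N_x$ and $N_y$ separately under a union bound at level $\delta$, and integrates out $\delta$. Your per-pair Bernstein bound on $N_y-rN_x$, taken in expectation, is an equally valid variant.

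However, the step you flag as the main obstacle is not a real obstacle; it comes from an algebra slip. Start from your own inequality $p_x\ell\le p_x(N_y-rN_x)^2/(N_xN_y)$ with $N_x\asymp np_x$ and $N_y\asymp np_y$. Since $r^2p_x/p_y=r$,
\[
\frac{p_x\cdot n(p_y+r^2p_x)\log n}{n^2p_xp_y}=\frac{\log n}{n}\Big(1+\frac{r^2p_x}{p_y}\Big)=\frac{(1+r)\log n}{n}.
\]
The factor $1/\min(1,r)$ in your bound is spurious. Only the forward bound $r=p_y/p_x\le\zeta$ is used, and $\cP_\zeta$ provides exactly that for both rate matrices. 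So the absorbing case needs no inverse-ratio bound and no separate small-$r$ argument. The paper's good case works the same way: $p_x\ell=p_y\,h(s/r)\lesssim p_y(\varepsilon_x^2+\varepsilon_y^2)$, where $p_y\varepsilon_x^2\lesssim\zeta\log(\cdot)/n$ and $p_y\varepsilon_y^2\lesssim\log(\cdot)/n$.

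Two loose ends need tightening.

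First, Bernstein for $N_y-rN_x$ carries an additive term of order $\max(1,r)\log n$, which you dropped. Also, when $np_x$ or $np_y$ is $O(\log n)$, you do not have $N_x\asymp np_x$ on $A$. On $A$ intersected with the Chernoff event you only have $N_x\gtrsim\max(np_x,\log n)$, and likewise for $N_y$. This is still enough. Bound the $\log^2 n$ piece using $N_x\gtrsim np_x$ and $N_y\ge\log n$. Bound the $r^2\log^2 n$ piece using $N_x\ge\log n$ and $N_y\gtrsim np_y$. Both give $O((1+r)\log n/n)$.

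Second, for the absorbing rate, $\sum_{y\in N(x)}\overrightarrow{Q}(y,x)=S\,|M(x)|$, where $M(x)$ is the set of masked coordinates. This equals $Sd$ at the all-mask state, so your pointwise claim that it is $\lesssim d$ is false. What you actually need is the aggregate bound
\[
\sum_{x}\sum_{y\in N(x)}\overrightarrow{Q}(y,x)=\frac{Sd|\cX|}{S+1}\le d|\cX|,
\]
which holds because each coordinate is masked in a $1/(S+1)$ fraction of states. This suffices because your per-pair bounds are uniform in $(x,y)$. The $p_x$-weighted remainders (the $p_x/n$ term and the failure-event terms) are dominated by the main term, since $\sum_x p_x\sum_{y\in N(x)}\overrightarrow{Q}(y,x)\le Sd$.
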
 
\begin{proof}[Proof of Theorem \ref{thm:thresholding}]
	We prove Theorem \ref{thm:thresholding} in Appendix \ref{proof:thm:thresholding}. 
\end{proof}

\subsection{Implications on the aggregated score-entropy loss}
\label{sec:aggregated-score}

When $\zeta$ is treated as a constant, the minimax lower and upper bounds established in Theorems~\ref{thm:minimax-lower-bound} and \ref{thm:thresholding} differ only by polylogarithmic factors. However, it is unclear in practice whether $\zeta$ can indeed be regarded as a constant.
In particular, a larger $\zeta$ results in a wider gap between the lower and upper bounds. 

In the next part, we show that this issue is largely mitigated when considering the aggregated score-entropy loss. 
For the intermediate distributions induced by both uniform and masking discrete diffusion models, the density ratios are automatically controlled as a function of time. 
This yields upper and lower bounds for the minimax aggregated score-entropy error that differ by at most poly-logarithmic factors.

To introduce our results, for $0 \leq t \leq T$, we define the time-dependent density ratio upper bound
\begin{align*}
	\zeta_t = \sup_{x \in \cX, y \in N(x)} \frac{p_t(y)}{p_t(x)}, 
\end{align*}
where we recall $p_t$ represents the marginal distribution in the forward process, and $p_0$ denotes the target distribution. 
The following two lemmas provide assumption-free bounds on $\zeta_t$, under both uniform and absorbing rate matrices.

\begin{lem}
\label{lem:31}
	Consider the uniform rate matrix \eqref{eq:uniform}, then for any $p_0$, 
	\begin{align*}
		\zeta_t \leq \frac{1 + (v - 1)e^{-t}}{1 - e^{-t}}. 
	\end{align*}
\end{lem}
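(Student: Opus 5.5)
The plan is to compute the forward transition kernel of the uniform CTMC explicitly and bound the density ratio termwise in the mixture representation of $p_t$. Because the forward process evolves independently across coordinates with the time-homogeneous token-level generator $Q^{\rm token}$, the transition probability from $z$ to $x$ over time $t$ factorizes as $\prod_{j=1}^d P_t(z^j, x^j)$, where $P_t = \exp(t Q^{\rm token})$. Therefore
\begin{align*}
p_t(x) = \sum_{z \in \cX} p_0(z) \prod_{j=1}^d P_t(z^j, x^j).
\end{align*}

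The first step is to compute $P_t$ in closed form. Under \eqref{eq:uniform} we have $v = S$, and $Q^{\rm token} = \frac{1}{S} J - I$, where $J$ is the all-ones matrix. Since $J^2 = SJ$, the matrix $\frac{1}{S}J$ is a projection that commutes with $I$. Hence
\begin{align*}
P_t = e^{-t} I + (1 - e^{-t}) \tfrac{1}{S} J.
\end{align*}
Explicitly, $P_t(a,a) = e^{-t} + (1-e^{-t})/S$ and $P_t(a,b) = (1-e^{-t})/S$ for $a \neq b$. So for every $a$, the entries of the row $P_t(a,\cdot)$ lie between $(1-e^{-t})/S$ and $e^{-t} + (1-e^{-t})/S$.

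Next, fix $x$ and $y \in N(x)$. Under the uniform rate matrix, $y$ differs from $x$ in exactly one coordinate $i$. For each $z$, the summands of $p_t(y)$ and $p_t(x)$ share every factor except the $i$-th one. Their ratio is therefore
\begin{align*}
\frac{P_t(z^i, y^i)}{P_t(z^i, x^i)} \le \frac{e^{-t} + (1-e^{-t})/S}{(1-e^{-t})/S} = \frac{1 + (S-1)e^{-t}}{1 - e^{-t}}.
\end{align*}
All summands are nonnegative, and for $t > 0$ the denominators are strictly positive. So each summand of $p_t(y)$ is at most this constant times the corresponding summand of $p_t(x)$. Summing over $z$ yields $p_t(y)/p_t(x) \le (1+(v-1)e^{-t})/(1-e^{-t})$, and taking the supremum over $x$ and $y \in N(x)$ gives the claimed bound on $\zeta_t$.

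There is no real obstacle here; the only point needing care is the closed form of $\exp(tQ^{\rm token})$, which the projection identity settles. The case $t = 0$ is trivial, since the right-hand side is then $+\infty$.
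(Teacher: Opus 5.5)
Your proof is correct and follows essentially the same route as the paper: the closed form $e^{tQ^{\rm token}} = e^{-t}I + (1-e^{-t})\frac{1}{v}\mathbf{1}\mathbf{1}^\top$ via the projection identity, and the mixture representation of $p_t$ over $p_0$. The only cosmetic difference is that you bound the ratio summand by summand. The paper instead groups by the value of $z_i$ into weights $A, B, R$ and verifies $\frac{u_tA+s_tB+u_tR}{s_tA+u_tB+u_tR}\le s_t/u_t$ by cross-multiplication, which amounts to the same bound.
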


\begin{proof}[Proof of Lemma \ref{lem:31}]
We prove Lemma \ref{lem:31} in Appendix \ref{proof:lem:31}. 
\end{proof}

\begin{lem}
\label{lem:32}
	Consider the absorbing rate matrix \eqref{eq:absorbing}, then for any $p_0$,  
	\begin{align*}
		\zeta_t \leq \frac{e^{-t}}{1 - e^{-t}}. 
	\end{align*}
\end{lem}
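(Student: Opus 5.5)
The plan is to compute the forward marginals $p_t$ in closed form, using the token-wise independence of the absorbing CTMC, and then read off each neighboring density ratio. Under the absorbing rate matrix \eqref{eq:absorbing}, each coordinate of $\overrightarrow{x}_t$ evolves independently. An unmasked token jumps to $\mask$ at rate $1$ and stays there forever, and a masked token never moves. Hence, given $\overrightarrow{x}_0 = x_0 \in [S]^d$, each coordinate is still equal to $x_0^i$ at time $t$ with probability $e^{-t}$, and equals $\mask$ with probability $1 - e^{-t}$, independently across coordinates.

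\textbf{Closed form for $p_t$.} For $x \in ([S]\cup\{\mask\})^d$, let $M(x)$ be the set of masked coordinates and $U(x) = [d]\setminus M(x)$ the unmasked ones. Summing over $x_0$ gives
\begin{align*}
p_t(x) = e^{-t\,|U(x)|}\,(1-e^{-t})^{|M(x)|}\; m_{U(x)}\big(x^{U(x)}\big),
\end{align*}
where $m_U$ denotes the marginal law of $p_0$ on the coordinates in $U$. This uses the fact that the masked coordinates of $x$ impose no constraint on $x_0$.

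\textbf{Identify $N(x)$.} By definition, $y \in N(x)$ means $\overrightarrow{Q}(y,x) > 0$. By \eqref{eq:absorbing} and the single-coordinate structure of $\overrightarrow{Q}$, this holds exactly when $y$ and $x$ differ in a single coordinate $i$ with $x^i = \mask$ and $y^i \in [S]$. In other words, $y$ is obtained from $x$ by unmasking one masked coordinate. Then $U(y) = U(x)\cup\{i\}$ and $|M(y)| = |M(x)|-1$, so
\begin{align*}
\frac{p_t(y)}{p_t(x)} = \frac{e^{-t}}{1-e^{-t}}\cdot \frac{m_{U(x)\cup\{i\}}\big(x^{U(x)}, y^i\big)}{m_{U(x)}\big(x^{U(x)}\big)}.
\end{align*}

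\textbf{Bound the ratio of marginals.} Summing $m_{U(x)\cup\{i\}}(x^{U(x)}, s)$ over $s \in [S]$ recovers $m_{U(x)}(x^{U(x)})$. Each summand is nonnegative, so any single term is at most the total, and the second factor above is at most $1$. This yields $\zeta_t \le e^{-t}/(1-e^{-t})$ with no assumption on $p_0$.

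The only delicate point is the degenerate case $p_t(x) = 0$. For $t>0$ this happens exactly when $m_{U(x)}(x^{U(x)}) = 0$, and then $p_t(y) = 0$ as well for every $y\in N(x)$. I would handle it by stating that the supremum defining $\zeta_t$ ranges over $x$ with $p_t(x)>0$, or equivalently by adopting the convention $0/0 = 0$. Such $x$ also contribute nothing to \eqref{eq:SE-t}, since the outer expectation is taken over $x \sim p_t$.
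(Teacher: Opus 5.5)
Your proposal is correct and follows essentially the same route as the paper. Both arguments use the closed-form masking marginal $p_t(x)=e^{-t|U(x)|}(1-e^{-t})^{|M(x)|}\,m_{U(x)}(x^{U(x)})$, identify $N(x)$ as the set of single-coordinate unmaskings of $x$, and bound the resulting ratio of marginals by a conditional probability at most $1$. Your explicit handling of the degenerate case $p_t(x)=0$ is extra care that the paper omits; its conditional probability is undefined there.
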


\begin{proof}[Proof of Lemma \ref{lem:32}]
    We prove Lemma \ref{lem:32} in Appendix \ref{proof:lem:32}. 
\end{proof}

Combining the above lemmas with Theorem \ref{thm:thresholding}, we arrive at the following conclusion: 
\begin{thm}
\label{thm:aggregated}
	Consider a discretization scheme $0 = t_0 < t_1 < \cdots < t_N = T - \delta$. For $k \in \{0, 1, \cdots, N - 1\}$, denote by $s_{T - t_k}$ the output of Algorithm \ref{alg:thresholding} constructed using $n$ samples from $p_{T- t_k}$, then there exists a positive numerical constant $C_2$, such that 
	\begin{align*}
		& \sum_{k = 0}^{N - 1} (t_{k + 1} - t_k) \cdot \E[ \cL_{\SE}(s_{T - t_k}, p_{T - t_k}, T - t_k) ] \\
        & \leq \frac{ C_2\, v |\cX| \log_v |\cX| (\log n)^2}{n} \cdot \Big(T + \log\frac{1}{\delta}\Big) \cdot  \Big(1 + \log v + \log\Big(1 + \frac{1}{\delta}\Big)\Big). 
	\end{align*} 
\end{thm}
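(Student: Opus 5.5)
We plug the time-dependent ratio bounds of Lemmas~\ref{lem:31} and \ref{lem:32} into Theorem~\ref{thm:thresholding} and then sum over the discretization. For each $k$, the marginal $p_{T-t_k}$ lies in $\cP_{\zeta}$ with $\zeta = \max\{\zeta_{T-t_k}, 3\}$; taking $\zeta>2$ keeps Assumption~\ref{ass:n-large} in force, and we assume the sample-size part of that assumption holds for every $\zeta$ used. Theorem~\ref{thm:thresholding} then gives
\begin{align*}
\E[\cL_{\SE}(s_{T-t_k}, p_{T-t_k}, T-t_k)] \le \frac{C_1 |\cX|\log_v|\cX| \log n}{n}\, \zeta \log(n\zeta).
\end{align*}
Write $u = T - t_k \in [\delta, T]$. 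By the lemmas, $\zeta_u \le (1+(v-1)e^{-u})/(1-e^{-u}) \le v/(1-e^{-u})$ in both the uniform and absorbing cases. Using $1-e^{-u} \ge \min\{u,1\}/2$, we obtain $\zeta_u \lesssim v(1 + 1/u)$.

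Next we simplify the logarithmic factor. Since $\log(n\zeta) \le \log n + \log\zeta$ and $\log n \ge 1$, we have $\log(n\zeta) \le \log n\,(1+\log\zeta)$. For $u\ge\delta$,
\begin{align*}
\log \zeta_u \lesssim 1 + \log v + \log(1 + 1/\delta).
\end{align*}
This is exactly the last factor in the claimed bound, so it can be pulled out uniformly in $k$. The remaining per-step factor is then $\frac{|\cX|\log_v|\cX|(\log n)^2}{n}\cdot v(1+1/u)$, up to constants.

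It remains to sum over $k$ with weights $t_{k+1}-t_k$. We need
\begin{align*}
\sum_{k}(t_{k+1}-t_k)\Big(1+\frac{1}{T-t_k}\Big) \lesssim T + \log\frac1\delta.
\end{align*}
The constant part contributes $T-\delta\le T$. The $1/u$ part is a left-endpoint Riemann sum of $\int_\delta^T du/u = \log(T/\delta)$. Because $1/(T-t)$ is increasing in $t$, a left-endpoint evaluation underestimates the integral on each interval, so the sum is at most $\int_{\delta}^{T} du/u \le \log T + \log(1/\delta)$. Finally, $\log T \le T$ absorbs the first term. Multiplying the three factors gives the stated bound with a suitable numerical constant $C_2$.

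The main obstacle I expect is checking that Theorem~\ref{thm:thresholding} genuinely applies at every time step with $\zeta = \zeta_u$. Near $u=\delta$ the ratio bound grows like $v/\delta$, so the condition $n \ge 100\log(nvd\zeta)$ and the exponential condition in Assumption~\ref{ass:n-large} must hold for that largest value. I would state this explicitly as a hypothesis (it only requires $n$ to exceed polylogarithmic terms in $1/\delta$). For the absorbing case I would simply use the cruder bound $\zeta_u\le 1/(1-e^{-u})\le v/(1-e^{-u})$, so that both cases are handled by the single formula above.
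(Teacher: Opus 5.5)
Your proposal follows essentially the paper's own proof: bound $\zeta_u \le v/(1-e^{-u})$ via Lemmas~\ref{lem:31} and \ref{lem:32}, plug this into Theorem~\ref{thm:thresholding}, pull the logarithmic factor out uniformly at $u=\delta$, and dominate the left-endpoint sum by an integral over $[\delta,T]$. The paper integrates $1/(1-e^{-x})$ directly instead of first passing to $1+1/u$. Also, the extra sample-size hypothesis you flag is unnecessary, because Theorem~\ref{thm:thresholding} is stated and proved without Assumption~\ref{ass:n-large}, which is used only for the lower bound.
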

\begin{proof}[Proof of Theorem \ref{thm:aggregated}]
	By Lemma \ref{lem:31} and \ref{lem:32}, we see that for both the uniform and absorbing rate matrices, it holds that $\zeta_t \leq v / (1- e^{-t})$. 
	Therefore, 
	\begin{align*}
		& \sum_{k = 0}^{N - 1} (t_{k + 1} - t_k) \cdot \E[ \cL_{\SE}(s_{T - t_k}, p_{T - t_k}, T - t_k)] \\
		& \leq \sum_{k = 0}^{N - 1} (t_{k + 1} - t_k) \cdot \frac{ C_1|\cX| \log_v |\cX| \zeta_{T - t_k} \log n \log (n \zeta_{T - t_k})}{n} \\
		& \lesssim  \frac{ |\cX| \log_v |\cX| (\log n)^2}{n} \sum_{k = 0}^{N - 1} (t_{k + 1} - t_k) \cdot \frac{v}{1 - e^{-T + t_k}} \cdot \Big(1 + \log\frac{v}{1 - e^{-T + t_k}} \Big)\\
        & \leq \frac{v |\cX| \log_v |\cX| (\log n)^2}{n}  \cdot \int_{\delta}^T \frac{1}{1 - e^{-x}} \dd x  \cdot \Big(1 + \log\frac{v}{1 - e^{-\delta}} \Big) \\
		& \leq  \frac{v |\cX| \log_v |\cX| (\log n)^2}{n} \cdot \Big(T + \log\frac{1}{\delta}\Big) \cdot  \Big(1 + \log v + \log\Big(1 + \frac{1}{\delta}\Big)\Big), 
	\end{align*}
	where ``$\lesssim$'' hides numerical constants. The proof is done. 
\end{proof}

Note that to obtain $n$ samples from $p_{T - t_k}$, it suffices to draw $n$ samples from $p_0$ and apply the forward noising process. 
Therefore, Theorem~\ref{thm:aggregated} yields an upper bound on the aggregated score estimation error based on $n$ samples from $p_0$. 

Recent developments in discrete distribution estimation suggest that the minimax rate for estimation under KL divergence scales as $\tilde{O}(|\mathcal{X}| / n)$, where $\tilde{O}(\cdot)$ hides poly-logarithmic factors  \cite{van2025nearly}. 
\rev{Combining the minimax lower bound in \cite{van2025nearly} with the KL error decomposition \eqref{eq:three-terms} for $\tau$-leaping and uniformization, and taking the time horizon $T$ sufficiently large and the discretization sufficiently fine so that the initialization and discretization errors are negligible, we obtain a minimax lower bound of $\widetilde{\Omega}(|\mathcal{X}|/n)$ for the aggregated score estimation error. Hence, the upper bound established in Theorem~\ref{thm:aggregated} is nearly tight up to polylogarithmic factors.}
Consequently, Theorem~\ref{thm:aggregated} suggests that with sufficiently accurate initialization and fine discretization, the resulting discrete diffusion model as a distribution estimator can achieve the minimax KL estimation rate (up to poly-logarithmic factors). 

\rev{
Theorem~\ref{thm:aggregated} shows that the contribution of the score estimation error, $\cE_{\rm SE}$, to the KL guarantee scales as $\widetilde{O}(|\mathcal{X}|/n)$. Substituting this bound into Eq.~\eqref{eq:three-terms} and choosing the time horizon and discretization scheme appropriately so that $\cE_{\rm init}$ and $\cE_{\rm dis}$ are no larger than the score estimation error term yields
\[
    \KL(p_{\rm target}\parallel p_{\rm output})
    = \widetilde{O}\!\left(\frac{|\mathcal{X}|}{n}\right).
\]
Thus, the near-minimax score estimation rate established in this paper translates directly into a near-minimax KL distribution estimation rate for the final generated distribution.}

\begin{revision}
\subsection{Comparison with recent work on concrete score estimation}

In this section, we compare our results with recent work on concrete score estimation. 

\paragraph{Concrete score estimation.}
\cite{srikanth2026discrete} studies the sample complexity of concrete score estimation using a neural-network-based score estimator trained by minimizing the empirical score-entropy loss. They establish upper bounds on the KL divergence between the target and output distributions that scale as $\widetilde{O}(n^{-1/2})$ in the number of training samples, equivalently requiring $\widetilde{O}(\epsilon^{-2})$ samples to achieve KL error $\epsilon$. Their analysis relies on several assumptions, including smoothness of the population loss, the Polyak--{\L}ojasiewicz condition, and uniform upper and lower bounds on the true score.
\cite{wakasugi2025state} also studies a neural-network-based score estimator trained by minimizing the empirical score-entropy loss. They obtain an upper bound on the KL divergence that scales as $\widetilde{O}(n^{-1})$ with the number of training samples. Their analysis assumes that both the true and learned score functions are uniformly bounded above and below and that the modified log-Sobolev constant is bounded away from zero.

\paragraph{Discrete flow matching.}
Discrete flow matching \cite{gat2024discrete} generalizes discrete diffusion models by allowing more general probability paths, with concrete score estimation replaced by learning the associated velocity fields. 
Among works studying the statistical complexity of discrete flow matching, \cite{su2025theoretical} proposes to embed the discrete state space into a continuous space.
Under the assumption that the true velocity field belongs to a Hölder class, they analyze a factorized velocity-field estimator parameterized by a transformer. 
Their resulting velocity-field estimation error scales as
\[
    \widetilde{O}\!\left(
        S^{13d_0} n^{-1/(5Sd_0)}
    \right),
\]
where $S$ is the vocabulary size and $d_0$ is the transformer feature dimension. Combining this estimation result with their analysis of the discrete flow yields the following total variation bound:
\[
    \widetilde{O}\!\left(
        S^{7d_0} n^{-1/(9Sd_0)}
    \right).
\]
\cite{wan2025error} also studies finite-sample velocity estimation for discrete flow matching. They analyze a neural-network-based estimator under the assumption that the true velocity field belongs to a Hölder class with smoothness parameter $\beta$. Their resulting total variation bound takes the form
\[
    \widetilde{O}\!\left(
        S^{\beta+1}d^{\lfloor\beta\rfloor+1}\tau^{-1/2}
        (n\tau^2)^{-\frac{\beta}{2(2\beta+d+1)}}
        + \tau d
    \right),
\]
where $d$ denotes the data dimension and $\tau$ is the early-stopping parameter.

\paragraph{Comparison with our results.}
Our results complement these recent analyses by addressing a different statistical question. 
We establish a minimax lower bound for concrete-score estimation under the score-entropy loss and construct an MLE-based thresholding estimator that matches this lower bound up to polylogarithmic factors. Specifically, the resulting minimax score-entropy risk scales as
\[
    \widetilde{\Theta}\!\left(\frac{|\mathcal{X}|}{n}\right).
\]
Combined with the KL decomposition in Eq.~\eqref{eq:three-terms}, this yields a KL distribution estimation error of order $\widetilde{O}(|\mathcal{X}|/n)$. 
For fixed problem parameters, this corresponds to an $n^{-1}$ dependence, which is faster than the $n^{-1/2}$ dependence in \cite{srikanth2026discrete} and the different powers of $n$ arising in the discrete flow-matching results discussed above. Notably, \cite{wakasugi2025state} also obtains an $n^{-1}$ dependence, while our result further provides a matching minimax lower bound and applies under fewer structural assumptions. In particular, \cite{wakasugi2025state} imposes smoothness assumptions that allow information to be shared across the domain, whereas our analysis does not require such structural assumptions. 
Indeed, for the final distribution estimation guarantee implied by Theorem~\ref{thm:aggregated}, we impose no additional structural assumptions on the target distribution. Consequently, our rate retains dependence on the full state space size $|\mathcal{X}|$. Importantly, our minimax lower bound shows that this linear dependence on $|\mathcal{X}|$ is unavoidable in the absence of additional structural assumptions. The related works discussed above consider different, more structured statistical settings and therefore exhibit different dependencies on the dimension, vocabulary size, and state space size.
\end{revision}

\section{Numerical experiments}
\label{sec:experiments}

In this section, we present numerical experiments to support our findings. 
For this experiment, we set the sample space as $\{0, 1\}^{10}$, and define the target distribution $p_0$ as the uniform distribution over all states in $\mathcal{X}$ whose entries sum to 2: 
\begin{align}
\label{eq:p0}
	p_0 = \mbox{Unif}\Big(\big\{ x \in\{0, 1\}^{10}: \sum_{i = 1}^{10} x_i = 2 \big\} \Big). 
\end{align} 
With $n$ samples i.i.d. from $p_0$, our goal is to construct a discrete diffusion model that generates new samples from $p_0$. 
For score estimation, we use the MLE thresholding algorithm described in Algorithm~\ref{alg:thresholding}.
For sampling, we adopt the $\tau$-leaping algorithm \cite{gillespie2001approximate}. 

\subsection{KL sampling error scales linearly with the aggregated score}
 
Recall that Eq.~\eqref{eq:three-terms} suggests that the KL divergence between the target and output distributions can be controlled linearly by the aggregated score estimation error. 
In the first experiment, we verify that this upper bound is tight by presenting a concrete example in which the KL divergence empirically exhibits a linear dependence on the aggregated score.

In this experiment, we focus on the target distribution $p_0$ from Eq.~\eqref{eq:p0}. 
We set $\delta = 0.02$, $T = 1.0$, and employ a discretization scheme with step size $0.02$. 
We present our numerical results in Figure \ref{fig:linear}. 
In this figure, the $x$-axis represents the aggregated score estimation error $\mathcal{L}_{\mathrm{SE}}$, and the $y$-axis shows the KL divergence between the target and output distributions. 
The numbers in the plot indicate the number of training samples used. 
The left panel corresponds to uniform discrete diffusion, and the right panel to masked diffusion. 
Both panels exhibit a clear linear trend, indicating that the sampling error scales linearly with the aggregated score error and that this dependence cannot be improved. 
Note that uniform discrete diffusion achieves significantly smaller error than masking discrete diffusion. This is because in the uniform setting $\mathcal{X} = \{0, 1\}^{10}$, while in the masking setting $\mathcal{X} = \{0, 1, \mask\}^{10}$, which has a substantially larger sample space. 
\begin{figure}[ht!]
\centering
\begin{subfigure}{.5\textwidth}
  \centering
  \includegraphics[width=\linewidth]{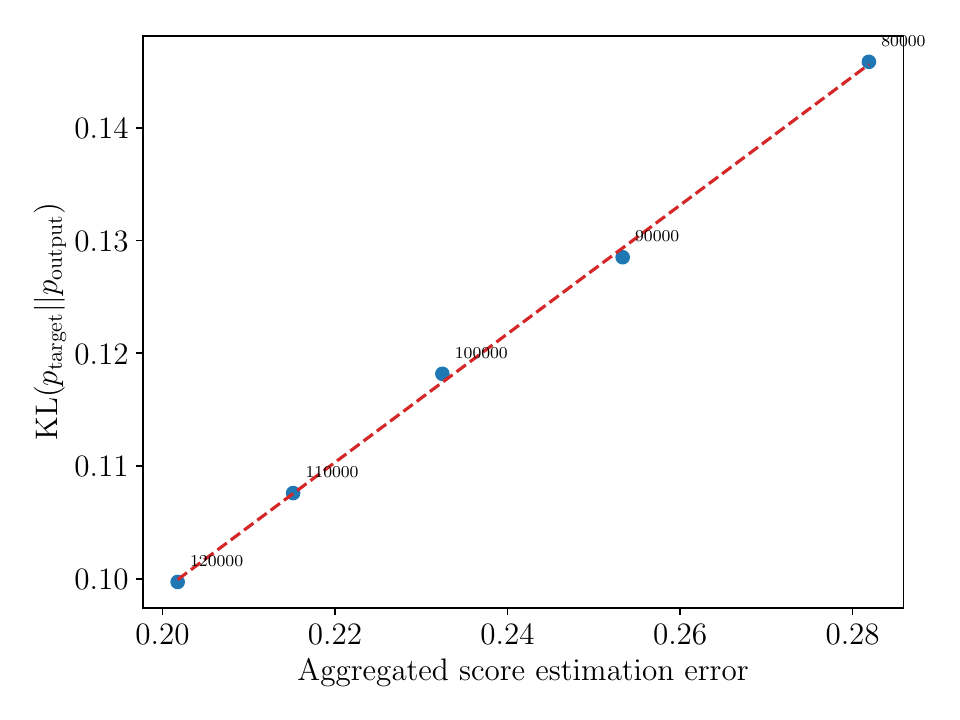}
\end{subfigure}%
\begin{subfigure}{.5\textwidth}
  \centering
  \includegraphics[width=\linewidth]{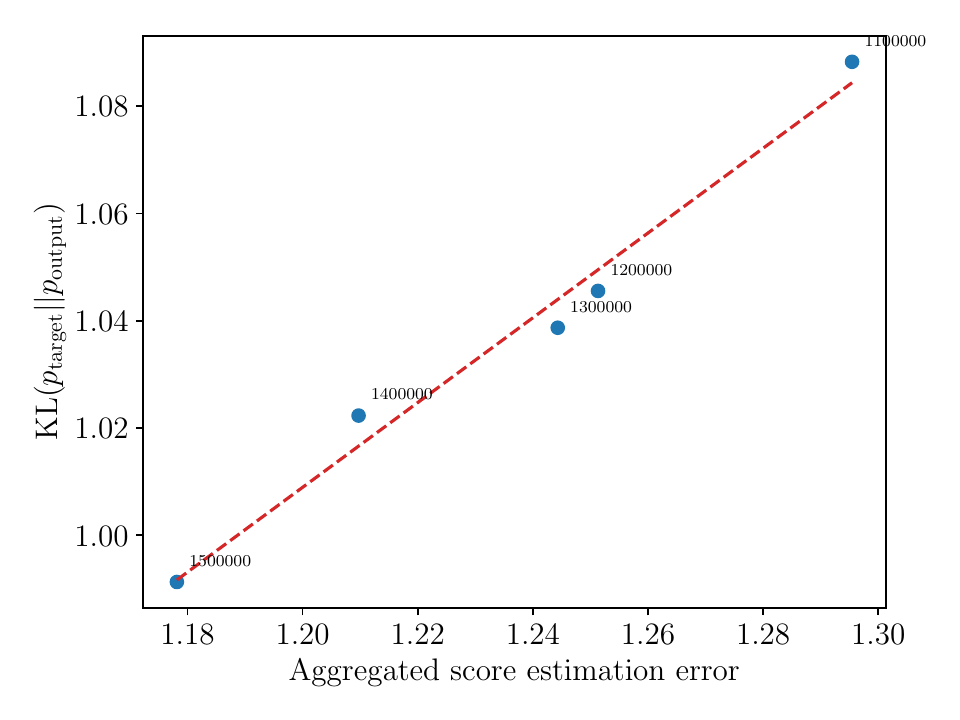}
\end{subfigure}
\caption{KL divergence between the target and output distributions as a function of the aggregated score estimation error. The left panel presents results for uniform discrete diffusion, and the right panel shows results for masking discrete diffusion.}
\label{fig:linear}
\end{figure} 

\subsection{MLE thresholding achieves optimal sample complexity}

In the next experiment, we show that MLE thresholding achieves an aggregated score estimation error that scales inversely with the sample size, confirming the theoretical findings in Section~\ref{sec:aggregated-score}. 
For this experiment, we set $p_0$ as in \cref{eq:p0}, $T = 1.0$, $\delta = 0.02$, and use a discretization scheme with step size $0.02$. 
We present the experimental results in Figure~\ref{fig:inv}. 
The $y$-axis reports the aggregated score estimation error, which is observed to scale approximately inversely with the sample size.

\begin{figure}[ht!]
\centering
\begin{subfigure}{.5\textwidth}
  \centering
  \includegraphics[width=\linewidth]{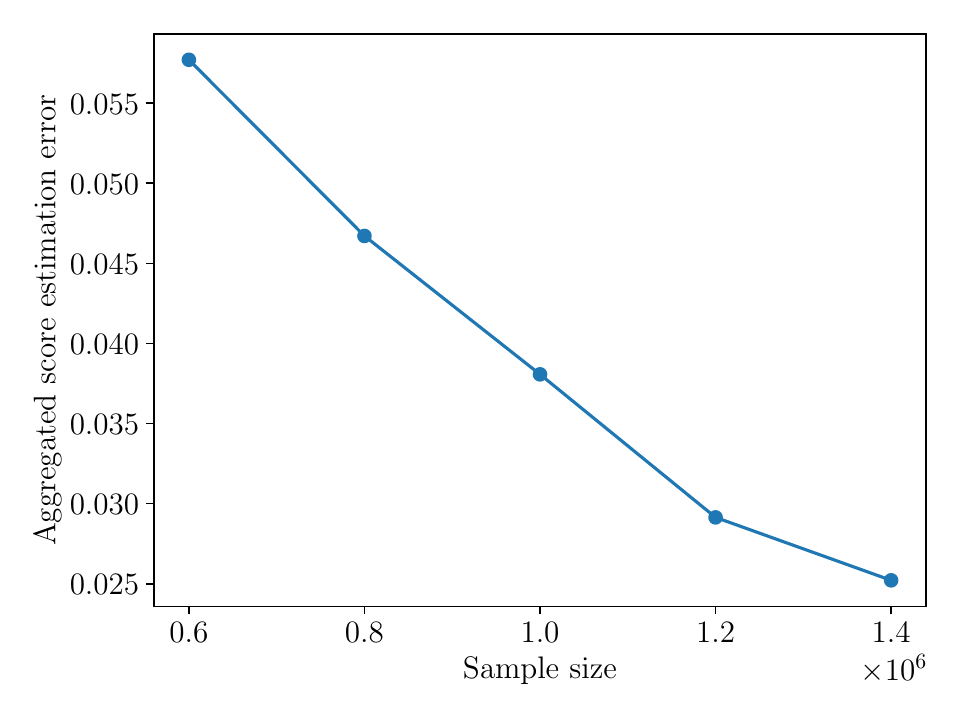}
\end{subfigure}%
\begin{subfigure}{.5\textwidth}
  \centering
  \includegraphics[width=\linewidth]{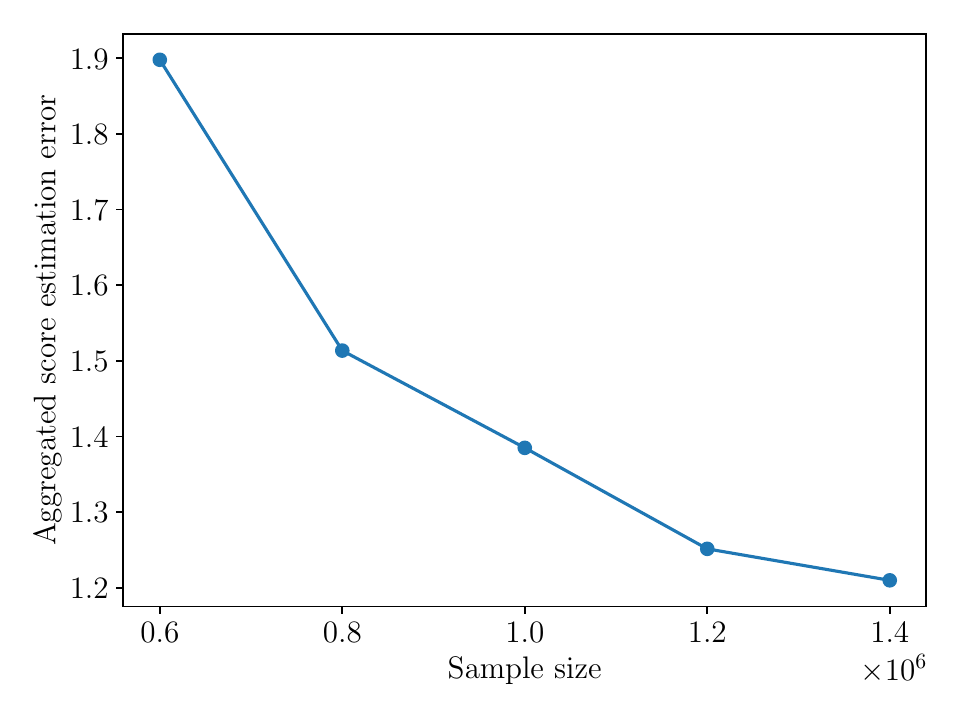}
\end{subfigure}
\caption{Aggregated score estimation error as a function of the sample size. The left panel presents results for uniform discrete diffusion, and the right panel shows results for masking discrete diffusion.}
\label{fig:inv}
\end{figure} 

\begin{revision}

\subsection{Dependence on the state space size}

Finally, we fix the number of training samples at $n=20{,}000{,}000$ and vary
\[
    d\in\{5,9,13,16,20\},
    \qquad
    |\cX|=2^d.
\]
Figure~\ref{fig:state-space} shows that the score estimation problem becomes increasingly difficult as the state space size grows. In particular, the aggregated score estimation error increases from approximately $5.79\times10^{-6}$ at $|\cX|=2^5$ to $0.793$ at $|\cX|=2^{20}$, while the endpoint KL divergence increases from approximately $0.052$ to $1.080$.
These numerical results are consistent with the derived minimax error rate's dependence on the state space size. 

\begin{figure}[ht!]
\centering
\begin{subfigure}[t]{.49\textwidth}
  \centering
  \includegraphics[width=\linewidth]
  {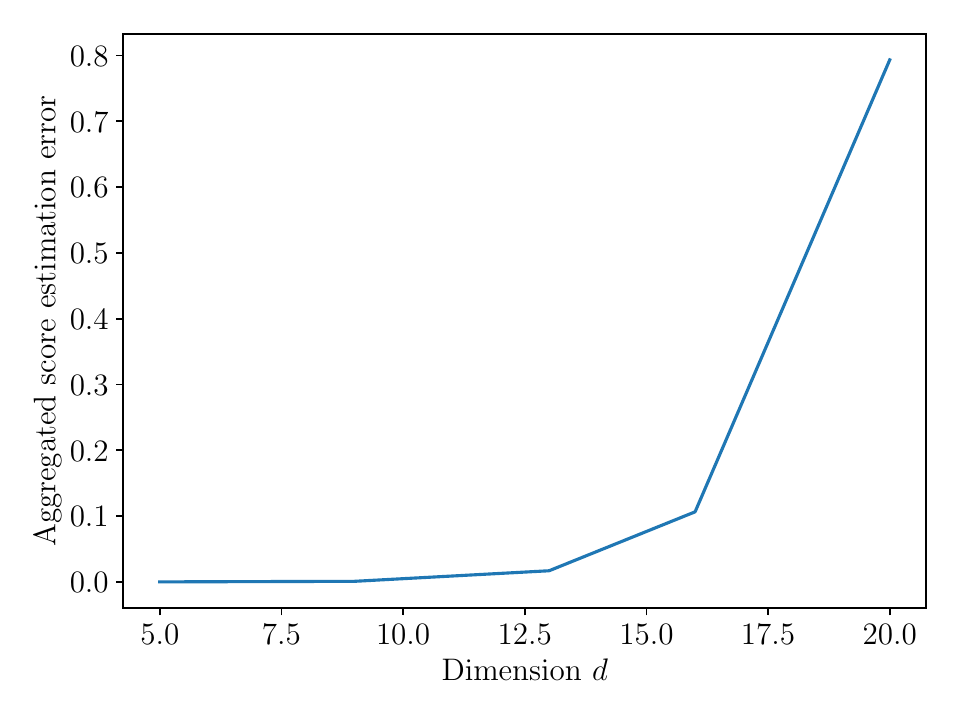}
  \caption{Aggregated score-estimation error.}
\end{subfigure}
\hfill
\begin{subfigure}[t]{.49\textwidth}
  \centering
  \includegraphics[width=\linewidth]
  {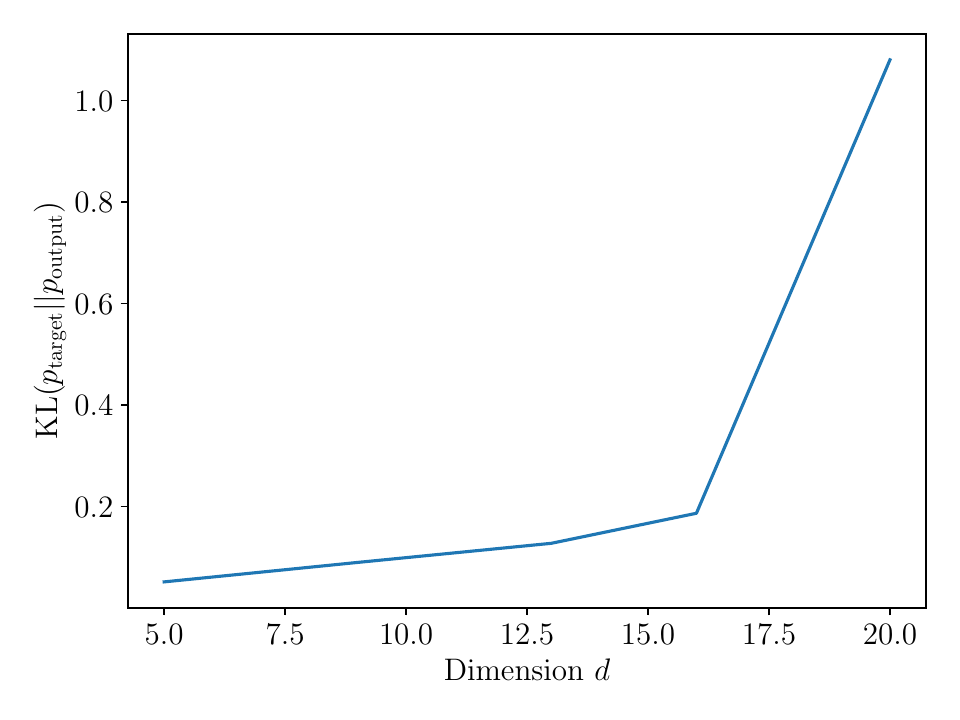}
  \caption{Endpoint KL divergence.}
\end{subfigure}
\caption{Dependence on the state-space size at the fixed sample size
$n=20{,}000{,}000$. Both score-estimation and sampling errors increase
as $|\cX|=2^d$ grows.}
\label{fig:state-space}
\end{figure}

\end{revision}

\section{Discussion}
\label{sec:discussion}

In this work, we study the sample complexity of estimating concrete score functions for discrete diffusion models under the score-entropy loss.
\rev{We establish two minimax results. First, Theorems~\ref{thm:minimax-lower-bound} and~\ref{thm:thresholding} characterize the minimax error rate of single score estimation under the score-entropy loss for a class of distributions with bounded neighboring density ratios. 
Specifically, we derive a minimax lower bound, and propose an MLE-based thresholding estimator that matches this lower bound up to constant and polylogarithmic factors. }
Our minimax lower and upper bounds differ by a factor that depends on the neighboring density ratio, which is not directly controllable for general real-world distributions.
On the positive side, we show that for both uniform and masking discrete diffusions, these neighboring density ratios can be automatically controlled, leading to nearly matching lower and upper bounds on the minimax aggregated score estimation error (Theorem~\ref{thm:aggregated}).
This result suggests that with sufficiently accurate initialization and discretization, SEDD achieves optimal minimax sample complexity, as measured by the KL divergence between the target and output distributions, providing our second minimax result.
Our analysis is closely connected to high-dimensional discrete distribution estimation and can be viewed as an application of this line of work to the diffusion model setting.

Several interesting directions remain open. For instance, our minimax lower and upper bounds for single score estimation differ by a factor depending on the upper bound of the neighboring density ratio, and it remains unclear whether this dependency can be removed.
Moreover, our current analysis does not impose any structural assumptions on the target distribution, so the resulting minimax rate scales with the ambient dimension. In practice, however, target distributions often exhibit low-dimensional structure and possess a much smaller intrinsic dimension. An important future direction is to extend our analysis to such settings and establish minimax rates that depend on the intrinsic dimension rather than the ambient dimension.

\bibliographystyle{alpha}
\bibliography{bib}

\newpage 

\appendix 

\section{Proof of theoretical results}

\subsection{Proof of Theorem \ref{thm:minimax-lower-bound}}
\label{proof:thm:minimax-lower-bound}

Throughout the proof, we treat probability distributions over $\mathcal{X}$ as $|\mathcal{X}|$-dimensional vectors with non-negative entries summing to one. 

\subsubsection*{Step I: Poissonized model and product prior}

A common approach to establishing a lower bound for the minimax risk is to consider the Bayes risk corresponding to a carefully chosen prior. 
In particular, we aim to identify an unfavorable prior $\pi$ and compute its associated Bayes risk.
For computational convenience, we adopt a product prior over $p$, and later use concentration of measure arguments to show that under this prior, $p$ is with high probability close to a proper distribution in $\cP_{\zeta}$. 
We consider a Poissonized model, under which we observe independent counts $N_x \sim \Poi(n p_x)$ for $x \in \mathcal{X}$. 
Given $N = (N_x)_{x \in \mathcal{X}}$, we want to estimate the density ratios $r_{y, x} = p_y / p_x$ for $x, y \in \mathcal{X}$ and $y \in N(x)$.

Under this Poissonized model, for each pair $x, y \in \mathcal{X}$ with $y \in N(x)$, the counts $(N_x, N_y)$ form a sufficient statistic for $r_{y, x}$. 
Hence, it suffices to consider estimators based solely on $(N_x,N_y)$ of the form: 
\begin{equation}
\label{eq:pairwise-est}
\hat r_{y, x} = g_{y, x}(N_y,N_x). 
\end{equation} 
For a prior $\pi$, we can then define its Bayes risk under such Poissonized model: 
\begin{align}
	& \cR_B^{\pos}(\pi) = \inf_{g}\ \E_{p \sim \pi}\Big[\,\E_{N}\Big[\sum_{x \in \mathcal{X}} p_x\,  \sum_{y\in N(x)} \overrightarrow{Q}(y, x)\, D_0\big(r_{y, x},\,g_{y, x}(N_y,N_x) \big) \, \big| \, p \Big]\,\Big], \label{eq:poisson-bayes} \\
	& D_0(r, \hat r\,) = \hat r - r - r \log ({\hat r} / {r}). \label{eq:D0}
\end{align}
In \cref{eq:poisson-bayes}, the outer expectation is taken with respect to the prior $\pi$, and the inner expectation is over independent Poisson counts $N_x \sim \mathrm{Poisson}(n p_x)$. 
We consider a product prior $\pi$, where under this prior  
\begin{align*}
	p_x \sim_{i.i.d.} \mbox{Unif} \Big\{\,\frac{1 + \eta}{|\cX|},\, \frac{1 - \eta}{|\cX|} \Big\}. 
\end{align*}
In the above equation, $\eta > 0$ is a positive parameter that satisfies 
\begin{align*}
	0 < \eta < \frac{\zeta - 1}{\zeta + 1}. 
\end{align*}
%
%
%
%
Since the Bayes risk in \eqref{eq:poisson-bayes} is a double sum and the estimators $g_{y, x}$ are specified for each pair $(y, x)$, we conclude that the Bayes risk admits the following decomposition:
\begin{align*}
& \cR_B^{\pos}(\pi)\ =\ \sum_{x\in \mathcal{X}}\sum_{y\in N(x)} \cR_B^{\pos}(y, x; \pi),\\
&\cR_B^{\pos}(y, x; \pi) =\inf_{g_{y, x}}\ \E_{p \sim \pi}\Big[\,\E_{N}\Big[\, p_x\,   \overrightarrow{Q}(y, x)\, D_0\big(r_{y, x},\,g_{y, x}(N_y,N_x) \big) \, \big| \, p \Big]\,\Big].
\end{align*}
%
We then lower bound each of the pairwise Bayes risk $\cR_B^{\pos}(y, x; \pi)$, where $x, y \in \cX$.
Conditional on $p$, for any $x, y \in \cX$ with $y \in N(x)$, the pair $(N_x,N_y)$ follows a product distribution $\Poi(n p_x)\otimes \Poi(n p_y)$, so
\begin{equation}
\label{eq:expand-sum}
\E_N\big[\,D_0\big(r_{y, x},\,g_{y, x}(N_y,N_x) \big) \, \big| \, p\,\big]=\sum_{j,k\ge 0} D_0\big(r_{y, x},\, g_{y, x}(k, j)\big)\, q^\Poi_{n p_x}(j)\, q^\Poi_{n p_y}(k)~, 
\end{equation}
where $q^{\Poi}_{\lambda}$ denotes the Poisson density with mean $\lambda$. 
Let $\lambda_\pm=n(1\pm\eta) / |\cX|$. Averaging \eqref{eq:expand-sum} over $(p_x,p_y)\in\{(1\pm\eta)/|\cX|\}^2$, we obtain that
\begin{align}
& \cR_B^{\pos}(y, x; \pi) \nonumber \\
&= \frac{\overrightarrow{Q}(y, x)}{4} \inf_{g_{y, x}} \sum_{j,k\ge 0}\Big[ \frac{1 + \eta}{|\cX|} \underbrace{q^\Poi_{\lambda_+}(j) q^\Poi_{\lambda_+}(k)}_{W^{++}_{jk}}\,D_0(1,\,g_{y, x}(k, j)) +\frac{1 - \eta}{|\cX|}\underbrace{q^\Poi_{\lambda_-}(j) q^\Poi_{\lambda_-}(k)}_{W^{--}_{jk}}\,D_0(1,\,g_{y, x}(k, j)) \nonumber\\
&+\frac{1 + \eta}{|\cX|}\underbrace{q^\Poi_{\lambda_+}(j) q^\Poi_{\lambda_-}(k)}_{U_{jk}}\, D_0\big(\tfrac{1-\eta}{1+\eta},\,g_{y, x}(k, j)\big) 
+\frac{1 - \eta}{|\cX|}\underbrace{q^\Poi_{\lambda_-}(j) q^\Poi_{\lambda_+}(k)}_{V_{jk}}\, D_0\big(\tfrac{1+\eta}{1-\eta},\,g_{y, x}(k, j)\big)\Big]. \label{eq:four-terms}
\end{align}
Note that for $a,b,u,v\ge0$, $(ua+vb)/2\ge \min\{u,v\}\,(a+b)/2$. 
Apply this to both the opposite-sign pair and the same-sign pair in \eqref{eq:four-terms}, we get 
\begin{align*}
\label{eq:min-extract}
\begin{split}
& \cR_B^{\pos}(y, x; \pi) \\
\ &\ge\ \frac{\overrightarrow{Q}(y, x)}{4|\cX| }\, \inf_{g_{y, x}}\sum_{j,k\ge0}\Big[\min\{U_{jk},V_{jk}\}\,\Big( (1 + \eta) D_0(\tfrac{1-\eta}{1+\eta},g_{y, x}(k, j))+ (1 - \eta)D_0(\tfrac{1+\eta}{1-\eta},g_{y, x}(k, j))\Big)\\
&\hspace{7.5em}+2\min\{W^{++}_{jk},W^{--}_{jk}\}\,D_0(1,g_{y, x}(k, j))\Big].
\end{split}
\end{align*}
Since $\inf_{a > 0} D_0(1,a)=0$ (attained at $a=1$), therefore: 
\begin{equation}
\label{eq:drop-samesign}
\cR_B^{\pos}(y, x; \pi) \ge \frac{\overrightarrow{Q}(y, x)}{4|\cX|}\sum_{j,k\ge0}\min\{U_{jk},V_{jk}\}\cdot {\inf_{a>0}\big[ (1 + \eta) D_0(\tfrac{1-\eta}{1+\eta},a)+ (1 - \eta) D_0(\tfrac{1+\eta}{1-\eta},a)\big]}.
\end{equation}
For $a,b,c,d \geq 0$, it holds that $\min\{ab,cd\}\ge \min\{a,c\}\min\{b,d\}$. Hence, 
\begin{align*}
\label{eq:overlap}
\begin{split}
\sum_{j,k \geq 0}\min\{U_{jk},V_{jk}\}
&\ge\Big(\sum_{j \geq 0}\min\{q^\Poi_{\lambda_+}(j),\,q^\Poi_{\lambda_-}(j)\}\Big)
 \Big(\sum_{k \geq 0}\min\{q^\Poi_{\lambda_+}(k),\,q^\Poi_{\lambda_-}(k)\}\Big)\\
&=\big(1-\TV(\Poi(\lambda_-),\Poi(\lambda_+))\big)^2~.
\end{split}
\end{align*}
We next analyze $\inf_{a>0}\big[ (1 + \eta) D_0(\tfrac{1-\eta}{1+\eta},a)+ (1 - \eta) D_0(\tfrac{1+\eta}{1-\eta},a)\big]$ in \cref{eq:drop-samesign}. Observe that 
\begin{align*}
	(1 + \eta) D_0(\tfrac{1-\eta}{1+\eta},a)+ (1 - \eta) D_0(\tfrac{1+\eta}{1-\eta},a) = 2( a - 1 - \log a ) + 2\eta \log \Big( \frac{1 + \eta}{1 - \eta} \Big). 
\end{align*}
As a consequence, $\inf_{a>0}\big[ (1 + \eta) D_0(\tfrac{1-\eta}{1+\eta},a)+ (1 - \eta) D_0(\tfrac{1+\eta}{1-\eta},a)\big] = 2\eta \log \frac{1 + \eta}{1 - \eta}$.  
Putting together the above analysis, we conclude that for $x, y \in \cX$ with $y \in N(x)$, 
\begin{equation}%
\label{eq:cRBpos}
	\cR_B^{\pos}(y, x; \pi) \ge \frac{\overrightarrow{Q}(y, x)}{2|\cX| }\cdot \big(1-\TV(\Poi(\lambda_-),\Poi(\lambda_+))\big)^2\cdot \eta \log \frac{1 + \eta}{1 - \eta}. 
\end{equation}
By Eq.~(2.2) of \cite{adell2006exact}, for $t,x\ge0$ we have 
\begin{equation*}
\TV(\Poi(t),\Poi(t+x))\ \le\ \min\Big\{1-e^{-x},\ \sqrt{\tfrac{2}{e}}\,(\sqrt{t+x}-\sqrt{t})\Big\}.
\end{equation*}
Setting $t=\lambda_-$ and $x=\lambda_+-\lambda_-$, we obtain that 
\begin{equation}
\label{eq:TV-bound}
\TV(\Poi(\lambda_-),\Poi(\lambda_+))\ \le\ \min\Big\{1-e^{-2\eta n / |\cX| },\ 2\eta\sqrt{\tfrac{n}{e\,|\cX|}}\Big\}.
\end{equation}
Substituting \cref{eq:TV-bound} back into \cref{eq:cRBpos}, we conclude that for all $x, y \in \cX$ and $y \in N(x)$, 
\begin{align}
\label{eq:pairwise-bayes-risk}
	\cR_B^{\pos}(y, x; \pi) \ge \frac{\overrightarrow{Q}(y, x)}{2|\cX| } \cdot  \max\Big\{e^{-2\eta n / |\cX| },\ 1 - 2\eta\sqrt{\tfrac{n}{e \,|\cX|}}\Big\}^2 \cdot \eta \log \frac{1 + \eta}{1 - \eta}. 
\end{align}
Setting 
\begin{align*}
	\eta = \frac{1}{2} \sqrt{\frac{|\cX|}{n}} \leq \frac{\zeta - 1}{\zeta + 1}.  
\end{align*}
With the above choice of $\eta$, we conclude that 
\begin{align}
\label{eq:18}
	\cR_B^{\pos}(y, x; \pi) \ge \frac{(1 - e^{-1/2})^2\,\overrightarrow{Q}(y, x) \eta^2}{|\cX| }  = \frac{(1 - e^{-1/2})^2\,\overrightarrow{Q}(y, x) }{4n }. 
\end{align}
Now we go back to the full Bayes risk. 
Summing up terms in \cref{eq:18} gives 
\begin{align}
\label{eq:step1-result}
	\cR_B^{\pos}(\pi)\ \geq\ \sum_{x\in \mathcal{X}}\sum_{y\in N(x)} \frac{ \overrightarrow{Q}                                                                                                                                                                                                                                                                                                                                                                                                                                                                                                                                                                                                                                                                   (y, x)}{30n}  \geq \frac{|\mathcal{X}|}{60n}. 
\end{align}

\subsubsection*{Step II: Connecting the Poissonized model to the original model. }

We now connect the Poissonized model to the original model. 
Recall that under the original model,  we observe $Z_1,Z_2, \cdots,Z_n \sim_{i.i.d.}\mathrm{Multi}(p)$ on a finite state space $\cX$ with probability mass functions $p=(p_x)_{x\in \cX}$.
For $x \in \cX$, with a slight abuse of notations, we write $N_x=\sum_{i = 1}^n \1\{Z_i=x\}$.
For every pair $(x, y) \in \cX^2$, note that $(N_x, N_y)$ is also a sufficient statistic for $(p_x, p_y)$ under the original model. 
Therefore, we shall restrict to estimators that take the form of $g_{y, x}(N_y, N_x)$ to estimate $r_{y, x} = p_y / p_x$. 

Given $p = (p_x)_{x \in \cX}$ (not necessarily a probability distribution vector), an estimator $g$, and data $N = (N_x)_{x \in \cX}$, we define 
\[
L(p;g,N)
=\sum_{x \in \cX} p_x  \sum_{y\in N(x)} \overrightarrow{Q}(y, x) D_0\big(r_{y, x},\,g_{y, x}(N_y,N_x)\big).
\]
We note the following scale homogeneity: if $s=\sum_{x \in \cX} p_x$ and $\bar p=p/s$, then
\begin{equation}\label{eq:scale}
L(p;g,N)=s\,L(\bar p;g,N)
\qquad\text{and}\qquad
\frac{p_y}{p_x}=\frac{\bar p_y}{\bar p_x}.
\end{equation}
For $\epsilon\in(0,1 / 2)$, we define 
\begin{align*}
	& \mathcal M_{\epsilon, \zeta}=\Big\{p:\,p_x\ge 0,\ |\sum_{x \in \cX} p_x-1|<\epsilon, \, \frac{p_y}{p_x} \leq \zeta\, \mbox{ for }x, y \in \cX \mbox{ with } y \in N(x)  \Big\}, \\
	& \mathcal M_{\zeta} = \Big\{p: \,p_x \geq 0,\, \sum_{x \in \cX} p_i = 1, \, \frac{p_y}{p_x} \leq \zeta\, \mbox{ for }x, y \in \cX \mbox{ with } y \in N(x)\Big\}. 
\end{align*}
We then define the multinomial and Poisson risks: 
\begin{align*}
	& \cR^{\multi}(n, \zeta) =\inf_{g}\sup_{p\in \mathcal M_{\zeta}}
\E_{\mathrm{Mult}(n,p)}\!\left[\,L(p;g,N)\,\right], \\
& \cR^{\pos}(n,\epsilon, \zeta) =\inf_{g}\sup_{p\in\mathcal M_{\epsilon, \zeta}}
\E_{\mathrm{Pois}(np)}\!\left[\,L(p;g,N)\,\right], 
\end{align*}
where the expectation is over the data $N$.
Fix $\delta>0$ and $m \in \N_+$, there exists a (nearly minimax) estimator $g_{\delta, m}$ such that
\begin{equation}
\label{eq:123B}
\sup_{p\in\mathcal M_{\zeta}}
\E_{\mathrm{Mult}(m,p)}\!\left[L(p;g_{\delta, m},N)\right]
\le \cR^{\multi}(m, \zeta)+\delta\quad\forall m\in\mathbb N_+.
\end{equation}
Recall that under the Poissonized model we observe independent counts $N_x \sim \Poi(n p_x)$ for $x \in \cX$. 
Let $n' = \sum_{x \in \cX} N_x$, 
and define the Poissonized estimator by plugging in the random total count: $g_{\delta}^P=g_{\delta, n'}$. 
Evaluating $\cR^{\pos}$ at $g_{\delta}^P$ gives
\begin{equation}\label{eq:124B}
\cR^{\pos}(n,\epsilon, \zeta)
\le
\sup_{p\in\mathcal M_{\epsilon, \zeta}}
\E_{\mathrm{Pois}(np)}\!\left[\,L(p;g_\delta^P,N)\,\right].
\end{equation}
Under $\mathrm{Pois}(np)$, $n' = \sum_{x \in \cX} N_x\sim\mathrm{Pois}(n s)$ with $s=\sum_{x \in \cX} p_x$, and
$N\mid (n'=m)\sim \mathrm{Mult}\!\left(m;\bar p\right)$ with $\bar p=p/s$.
Therefore,
\begin{align}\label{eq:125B}
\begin{split}
\sup_{p\in\mathcal M_{\epsilon, \zeta}}
\E_{\mathrm{Pois}(np)}\!\left[\,L(p;g_\delta^P,N)\,\right]
= &
\sup_{p\in\mathcal M_{\epsilon, \zeta}} \sum_{m=0}^\infty
\E_{\mathrm{Mult}(m,\bar p)}\!\left[\,L(p;g_{\delta, m},N)\,\right]\P_{\mathrm{Pois}(np)}(n'=m). 
\end{split}
\end{align}
Using \cref{eq:scale,eq:123B} and the fact that $s = \sum_{x \in \cX} p_x\le 1+\epsilon$ for all $p \in\mathcal M_{\epsilon, \zeta}$, we obtain that for all $p\in\mathcal M_{\epsilon, \zeta}$, 
\begin{align}\label{eq:126B} 
\begin{split}
\E_{\mathrm{Mult}(m,\bar p)}\!\left[\,L(p;g_\delta^P,N)\,\right]
\le &\,
(1+\epsilon)\,
\E_{\mathrm{Mult}(m,\bar p)}\!\left[\,L(\bar p;g_{\delta, m},N)\,\right] \\
\le &\, (1+\epsilon)\, \Big( \cR^{\multi}(m, \zeta)+\delta\Big). 
\end{split}
\end{align}
Combine \cref{eq:125B,eq:126B}, we obtain that 
\begin{equation}\label{eq:127B}
\sup_{p\in\mathcal M_{\epsilon, \zeta}}
\E_{\mathrm{Pois}(np)}\!\left[\,L(p;g_\delta^P,N)\,\right]
\le \sup_{p\in\mathcal M_{\epsilon, \zeta}}
\sum_{m=0}^\infty  (1+\epsilon)\left(\cR^{\multi}(m, \zeta)+\delta\right)\P_{\mathrm{Pois}(np)}(n'=m).
\end{equation}
We then upper bound the right-hand-side of Eq.~\eqref{eq:127B}. 
Consider the risk of the trivial all-one estimator $g_1 \equiv 1$, we observe that 
\begin{align}
\label{eq:envelope}
	\cR^{\multi}(m, \zeta) \leq \sup_{ p\in \mathcal M_{\zeta}}\;
\E_{\mathrm{Mult}(m,p)}\!\left[L(p;g_1,N)\right] \leq vd\, (\log \zeta + 1).
\end{align}
Fix $\xi\in(0,1]$ and set $t_p=\lfloor \tfrac{np}{1+\xi}\rfloor$.
Split the sum on the right-hand-side of Eq.~\eqref{eq:127B} at $t_p$ and use monotonicity in $m$ for the large-$m$ block, and the upper bound in \cref{eq:envelope} for the small-$m$ block, we obtain the following:  
\begin{align}\label{eq:128B}
\begin{split}
& \sum_{m=0}^\infty (1+\epsilon)\left(\cR^{\multi}(m, \zeta)+\delta\right)\P_{\mathrm{Pois}(np)}(n'=m) \\
& \le
(1+\epsilon)\Big\{ vd (\log \zeta + 1) \,\P_{\mathrm{Pois}(np)}(n'\le t_p) + \cR^{\multi}(t_p, \zeta)+\delta\Big\}.
\end{split}
\end{align}
By a Poisson left-tail bound, we conclude that for all $p\in\mathcal M_{\epsilon, \zeta}$,
\[
\P_{\mathrm{Pois}(np)}\!\left(n'\le \frac{np}{1+\xi}\right)\le \exp\!\Big(-\frac{\xi^2 n}{24}\Big).
\]
Therefore, combining Eqs.~\eqref{eq:124B}--\eqref{eq:128B}, we see that 
\begin{align}\label{eq:129B}
 \cR^{\pos}(n,\epsilon, \zeta)  \le
(1+\epsilon)\Big\{ vd\, (\log \zeta + 1) \, \,e^{-\xi^2 n/24} +  \cR^{\multi}\big(\lfloor \tfrac{n(1 - \epsilon)}{1+\xi}\rfloor, \zeta \big)  + \delta\Big\}.
\end{align}
We next lower bound $\cR^{\pos}(n,\epsilon, \zeta) $ using the Poisson Bayes risk established in Step~I, which in turn yields a lower bound for $\cR^{\multi}$. 
Recall that $\pi$ is a prior that we introduced in Step I. 
Based on $\pi$, we define a truncated distribution $\bar \pi$ as 
\[
\bar \pi(\d p)=\frac{\pi(\d p)\,\mathbf 1_{\mathcal M_{\epsilon, \zeta}}(p)}{\pi(\mathcal  M_{\epsilon, \zeta})}.
\]
Let $ g_{\bar \pi}$ be a Bayes estimator for the concrete score under prior $\bar \pi$ with data drawn from the Poissonized model (with parameter $ np$). 
By the construction of $\pi$, we must have $g_{\bar\pi} \in [\zeta^{-1}, \zeta]$, otherwise the risk could be reduced by projecting the density ratio estimates onto the interval $[\zeta^{-1}, \zeta]$.
Therefore, for all $p \in \mathcal \mathcal  M_{\epsilon, \zeta}$
\begin{align*}
	L(p; g_{\bar \pi}, N) \leq 2vd \zeta (1 + \log \zeta). 
\end{align*}
Note that $\cR_B^{\pos}(\pi)$ from \cref{eq:poisson-bayes} admits the following decomposition: 
\begin{align}
 &
\cR_B^{\pos}(\pi) \le \E_{p \sim \pi}\Big[\, \E_N\big[\, L(p; g_{\bar \pi},N)\, \big] \Big] =
\int_{\mathcal \mathcal  M_{\epsilon, \zeta}}\E_N\big[ L(p; g_{\bar \pi},N) \big] \pi(\d p)\;+\;
\int_{\mathcal  M_{\epsilon, \zeta}^c}\E_N\big[ L(p; g_{\bar \pi},N) \big]\, \pi(\d p),\nonumber 
\end{align}
where the inner expectations are taken over $N_x \sim \mbox{Pois}(n p_x)$ for all $x \in \mathcal{X}$. 
In addition, note that 
\begin{align*}
	&
\int_{\mathcal  M_{\epsilon, \zeta}}\E_N\big[\, L(p; g_{\bar \pi},N)\, \big] \pi(\d p)
=\pi(\mathcal  M_{\epsilon, \zeta})\!\int_{\mathcal  M_{\epsilon, \zeta}}\E_N\big[\, L(p; g_{\bar \pi},N)\, \big] \bar\pi(\d p),
\notag\\
&
\int_{\mathcal  M_{\epsilon, \zeta}^c}\E_N\big[\, L(p; g_{\bar \pi},N)\, \big] \pi(\d p)
\le 2 vd\, \zeta(\log \zeta + 1)\,\pi\big(\mathcal  M_{\epsilon, \zeta}^c\big).
\end{align*}
Therefore, 
\begin{align}
\label{eq:cRB-pos-lower}
	\cR_B^{\pos}(\pi)\ \le\ \pi(\mathcal  M_{\epsilon, \zeta})\!\int_{\mathcal  M_{\epsilon, \zeta}}\E_N\big[\, L(p; g_{\bar \pi},N)\, \big] \bar\pi(\d p)
\;+\; 2 vd\, \zeta(\log \zeta + 1)\,\pi\big(\mathcal  M_{\epsilon, \zeta}^c\big). 
\end{align}
Recall that $\pi$ is a product distribution, by standard concentration arguments we have 
\[
\pi\big(\mathcal  M_{\epsilon, \zeta}^c\big)\;\le\;2\exp\!\Big(-\tfrac{\epsilon^2|\cX|}{2\eta^2}\Big).
\]
In addition, we note that 
\begin{align}
\label{eq:29}
	\int_{\mathcal  M_{\epsilon, \zeta}}\E_N\big[\, L(p; g_{\bar \pi},N)\, \big] \bar\pi(\d p) \leq \cR^{\pos}(n,\epsilon, \zeta). 
\end{align}
Combining the above bounds, we conclude that 
\begin{align*}
	\cR_B^{\pos}(\pi)\ \le\ \cR^{\pos}(n,\epsilon, \zeta) +  2 vd\, \zeta(\log \zeta + 1)\,\exp\!\Big(-\tfrac{\epsilon^2|\cX|}{2\eta^2}\Big). 
\end{align*}
Putting together the above upper bound an \cref{eq:129B}, we obtain that 
\begin{align*}
	 & \cR_B^{\pos}(\pi) \\
	 & \le
(1+\epsilon)\Big\{ vd\, (\log \zeta + 1) \, \,e^{-\xi^2 n/24} +  \cR^{\multi}\big(\lfloor \tfrac{n(1 - \epsilon)}{1+\xi}\rfloor, \zeta \big)  + \delta\Big\}  + 2 vd\, \zeta(\log \zeta + 1)\,\exp\!\Big(-\tfrac{\epsilon^2|\cX|}{2\eta^2}\Big). 
\end{align*}
We can remove $\delta$ as it is arbitrarily small.
Taking $n\mapsto \lfloor \tfrac{n(1 - \epsilon)}{1+\xi}\rfloor$ and rearranging, we get
\begin{align}
\label{eq:32}
\begin{split}
	& \cR^{\multi}\big(n, \zeta \big) \\
&\ge\;
\frac{1}{1+\epsilon}\, \Big( \cR_B^{\pos}(\pi) - 2 vd\, \zeta(\log \zeta + 1)\,\exp\!\Big(-\tfrac{\epsilon^2|\cX|}{2\eta^2}\Big)\Big) 
\;-\; vd\, (\log \zeta + 1) \, \,e^{-\xi^2 n/24}. 
\end{split}
\end{align}
\begin{revision}
Substituting in the definition of $\eta$ gives 
\begin{align*}
\begin{split}
	& \cR^{\multi}\big(n, \zeta \big) \\
&\ge\;
\frac{1}{1+\epsilon}\,  \cR_B^{\pos}(\pi) - \frac{2}{1+\epsilon} vd\, \zeta(\log \zeta + 1)\,\exp\!\big(-2\epsilon^2 n\big) 
\;-\; vd\, (\log \zeta + 1) \, \,e^{-\xi^2 n/24}. 
\end{split}    
\end{align*}
Letting $\epsilon = 0.49$ and plugging in \cref{eq:step1-result} yields 
\begin{align*}
    \cR^{\multi}\big(n, \zeta \big) \geq & \,\frac{1}{60 \times 1.49} \cdot \frac{|\cX|}{n} - \frac{2}{1.49}vd\, \zeta(\log \zeta + 1)\,e^{-0.4802 n} -\; vd\, (\log \zeta + 1) \, \,e^{-\xi^2 n/24} \\
    \geq & \,\frac{|\cX|}{1000 n}   -\; vd\, (\log \zeta + 1) \, \,e^{-\xi^2 n/24}. 
\end{align*}
Setting 
\begin{align*}
	\xi = \frac{10 \sqrt{\,\log \big(nvd \zeta \big)}}{\sqrt{n}} 
\end{align*}
completes the proof of the theorem.  
\end{revision}

\subsection{Proof of Theorem \ref{thm:thresholding}}
\label{proof:thm:thresholding}

This proof relies on the following standard binomial concentration inequality. 

\begin{lem}
\label{lemma:binomial}
	Let $X \sim \mathrm{Binom}(n, p)$. Then for any $\delta \in (0, 1)$, 
	\begin{align*}
		\P\big(\, |X - np| \geq \delta np \, \big) \leq 2 \exp\left( -\frac{\delta^2 np}{3} \right). 
	\end{align*}
\end{lem}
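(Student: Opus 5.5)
This is the standard multiplicative Chernoff bound, and I will prove it by handling the two tails separately with the exponential-moment (Chernoff) method and then adding them.

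\emph{Moment generating function.} Write $X=\sum_{i=1}^n B_i$ with $B_i\sim_{i.i.d.}\mathrm{Bernoulli}(p)$, and set $\mu=np$. For any $\lambda\in\R$,
\[
\E[e^{\lambda X}]=(1+p(e^\lambda-1))^n\le \exp\big(\mu(e^\lambda-1)\big),
\]
using $1+u\le e^u$.

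\emph{Upper tail.} For $\lambda>0$, Markov's inequality gives
\[
\P(X\ge(1+\delta)\mu)\le \exp\big(\mu(e^\lambda-1)-\lambda(1+\delta)\mu\big).
\]
Choosing $\lambda=\log(1+\delta)$ yields the bound $\big(e^\delta/(1+\delta)^{1+\delta}\big)^\mu$. It then suffices to show that
\[
f(\delta)=\delta-(1+\delta)\log(1+\delta)+\delta^2/3\le 0 \quad\text{on } [0,1].
\]
We have $f(0)=0$ and $f'(\delta)=-\log(1+\delta)+2\delta/3$, with $f'(0)=0$. Since $f''(\delta)=-1/(1+\delta)+2/3$, the function $f'$ first decreases and then increases on $[0,1]$. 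At the right endpoint, $f'(1)=2/3-\log 2<0$. Together with $f'(0)=0$, this gives $f'\le0$ on $[0,1]$, hence $f\le 0$. This is the step requiring the most care, because the constant $1/3$ is tight only up to the endpoint check at $\delta=1$.

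\emph{Lower tail.} For $\lambda>0$, applying Markov's inequality to $e^{-\lambda X}$ gives
\[
\P(X\le(1-\delta)\mu)\le \exp\big(\mu(e^{-\lambda}-1)+\lambda(1-\delta)\mu\big).
\]
Choosing $\lambda=-\log(1-\delta)$ yields $\big(e^{-\delta}/(1-\delta)^{1-\delta}\big)^\mu$. Using the expansion
\[
(1-\delta)\log(1-\delta)\ge -\delta+\delta^2/2,
\]
which follows by expanding the series and checking that all higher-order coefficients are nonnegative, this is at most $e^{-\delta^2\mu/2}\le e^{-\delta^2\mu/3}$.

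\emph{Conclusion.} The event $|X-np|\ge\delta np$ is the union of the two tail events, so a union bound gives the total $2\exp(-\delta^2np/3)$.
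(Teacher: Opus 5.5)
Your proof is correct. The paper gives no proof of this lemma and simply cites it as a standard binomial (multiplicative Chernoff) concentration inequality. Your argument is the textbook derivation it relies on: bound the MGF by $\exp(\mu(e^\lambda-1))$, optimize $\lambda$ in each tail, verify $\delta-(1+\delta)\log(1+\delta)\le-\delta^2/3$ and $(1-\delta)\log(1-\delta)\ge-\delta+\delta^2/2$ on the relevant range, and finish with a union bound.
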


When $p_x \geq 24\log (n / \delta) / n$, then $N_x \sim \mathrm{Binom}(n, c \log (n / \delta) / n)$ for $c \geq 24$. By Lemma \ref{lemma:binomial}, 
\begin{align*}
	\P(N_x < \log n ) \leq \P \Big(\, \big| N_x - c \log (n / \delta) \big| \geq \frac{c}{2} \log (n / \delta) \,\Big) \leq 2 \exp \left( -\frac{c \log (n / \delta)}{12} \right) \leq \frac{2 \delta^2}{n^2}. 
\end{align*}
For $x \in \mathcal{X}$, we define $\mathcal{E}_x = \{\,p_x \geq 24 \log (n / \delta) / n, \, N_x < \log n\}$. Then for all $x \in \mathcal{X}$, $\mathbb{P}(\mathcal{E}_x) \leq 2\delta^2 / n^2$. 
Observe that 
%
\begin{align*}
	\cL_{\SE}(s, p) = &  \E_{x \sim p} \Big[ \sum_{y \in N(x)} \overrightarrow{Q}(y, x) \Big( s(y, x) - \frac{p_y}{p_x}  - \frac{p_y}{p_x} \log \frac{ s(y, x)}{{p_y / p_x}}\, \Big) \Big] \\
    = & \sum_{x \in \mathcal{X}} \sum_{y \in N(x)} p_x\overrightarrow{Q}(y, x) \Big( s(y, x) - \frac{p_y}{p_x}  - \frac{p_y}{p_x} \log \frac{ s(y, x)}{{p_y / p_x}}\, \Big)  \\
    = & \sum_{x \in \mathcal{X}} \sum_{y \in N(x)} \1_{\cE_x \cup \cE_y} \, p_x\overrightarrow{Q}(y, x) \Big( s(y, x) - \frac{p_y}{p_x}  - \frac{p_y}{p_x} \log \frac{ s(y, x)}{{p_y / p_x}}\, \Big) \\
    & + \sum_{x \in \mathcal{X}} \sum_{y \in N(x)} \1_{\cE_x^c \cap \cE_y^c} \, p_x\overrightarrow{Q}(y, x) \Big( s(y, x) - \frac{p_y}{p_x}  - \frac{p_y}{p_x} \log \frac{ s(y, x)}{{p_y / p_x}}\, \Big). 
\end{align*}
Define $Q_{\infty} = \sup_{x, y} |\overrightarrow{Q}(y, x)|$. 
Note that for $p \in \cP_\zeta$, 
\begin{align}
\label{eq:A2-1}
    \E_N\Big[ \1_{\cE_x \cup \cE_y} \, p_x \overrightarrow{Q}(y, x) \Big( s(y, x) - \frac{p_y}{p_x}  - \frac{p_y}{p_x} \log \frac{ s(y, x)}{{p_y / p_x}}\, \Big)\Big] \leq \frac{4\delta^2 Q_{\infty}}{n^2} \Big( \frac{p_x}{n} - p_y + p_y \log (n\zeta) \Big), 
\end{align}
where the expectation is taken with respect to the counts $\{N_x: x \in \mathcal{X}\}$. 
%

For $x, y \in \cX$ with $y \in N(x)$, if $p_x < 24 \log (n / \delta) / n$ or $p_y < 24 \log (n / \delta) / n$, then by the upper bound $p_y / p_x \leq \zeta$, it holds that $p_y \leq 24 \zeta  \log (n / \delta) / n$. 
In this case, the following statements are true: 
\begin{enumerate}
    \item If $\min \{N_x, N_y\} < \log n$, then
\begin{align}
\label{eq:A2-2}
\begin{split}
	& \1_{\cE_x^c \cap \cE_y^c} \, p_x\overrightarrow{Q}(y, x) \Big( s(y, x) - \frac{p_y}{p_x}  - \frac{p_y}{p_x} \log \frac{ s(y, x)}{{p_y / p_x}}\, \Big) \\
    & =  p_x \,\1_{\cE_x^c \cap \cE_y^c} \, \overrightarrow{Q}(y, x) \Big( \frac{1}{n} - \frac{p_y}{p_x}  + \frac{p_y}{p_x} \log \frac{ n p_y}{{  p_x}}\, \Big) \\
    & \leq \frac{Q_{\infty}}{n} \Big( p_x + 24 \zeta \log (n / \delta) \big( \log n + \log \zeta\big) \Big). 
\end{split}
\end{align}
\item If $\min \{N_x, N_y\} \geq \log n$, then 
\begin{align*}
\begin{split}
    & \1_{\cE_x^c \cap \cE_y^c} \, p_x\overrightarrow{Q}(y, x) \Big( s(y, x) - \frac{p_y}{p_x}  - \frac{p_y}{p_x} \log \frac{ s(y, x)}{{p_y / p_x}}\, \Big) \\
    & = \1_{\cE_x^c \cap \cE_y^c} \, p_x\overrightarrow{Q}(y, x) \Big( \frac{N_y}{N_x} - \frac{p_y}{p_x}  - \frac{p_y}{p_x} \log \frac{ N_y / N_x}{{p_y / p_x}}\, \Big) \\
    & \leq \frac{Q_{\infty}p_x N_y}{N_x} + Q_{\infty}p_y \log \zeta + Q_{\infty}p_y \log |\mathcal{X}|.
\end{split}
\end{align*}
Taking the expectation over the count data $N$, we obtain that  
\begin{align}
\label{eq:A2-4}
\begin{split}
    & \E\Big[ \1_{\cE_x^c \cap \cE_y^c} \, p_x\overrightarrow{Q}(y, x) \Big( s(y, x) - \frac{p_y}{p_x}  - \frac{p_y}{p_x} \log \frac{ s(y, x)}{{p_y / p_x}}\, \Big) \Big| \, N_x \geq \log n \Big] \\
    & \leq \E_N\Big[ \frac{Q_{\infty}np_x p_y}{N_x}  \,\Big| \, N_x \geq \log n\Big] + Q_{\infty} p_y \log \zeta +  Q_{\infty} p_y \log |\mathcal{X}| \\
    & \leq \frac{24 Q_{\infty} \zeta \log (n / \delta)}{n} \E_N \Big[ \frac{np_x}{N_x} \,\Big| \, N_x \geq \log n \Big] + Q_{\infty} p_y \log \zeta +  Q_{\infty} p_y \log |\mathcal{X}| \\
    & \leq \frac{96Q_{\infty}  \zeta \log (n / \delta)}{n} + \frac{24Q_{\infty} \zeta \log (n / \delta) (\log |\mathcal{X}|+ \log \zeta )}{n}.
\end{split}
\end{align}
\end{enumerate}
On the other hand, if both $p_x \geq 24 \log (n / \delta) / n$ and $p_y \geq 24 \log (n / \delta) / n$, then  for any $\varepsilon_x, \varepsilon_y \in(0,1)$, 
\begin{align*}
	& \P\big(|N_x - np_x| \geq \varepsilon_x np_x \big) \leq 2 e^{-\varepsilon_x^2 n p_x / 3}, \\
	& \P\big(|N_y - np_y| \geq \varepsilon_y np_y \big) \leq 2 e^{-\varepsilon_y^2 n p_y / 3}. 
\end{align*}
Taking 
\begin{align*}
	\varepsilon_x = \sqrt{\frac{3 \log(|\cX| / \delta)}{n p_x}}, \qquad \varepsilon_y = \sqrt{\frac{3 \log(|\cX| / \delta)}{n p_y}}, 
\end{align*}
we conclude that with probability at least $1 - \delta$, for all $x, y \in \cX$ with $y \in N(x)$ and $p_x, p_y \geq 24 \log (n / \delta) / n$, it holds that 
\begin{align}
\label{kappa-bound}
	\frac{1-\max\{\varepsilon_x, \varepsilon_y\}}{1+\max\{\varepsilon_x, \varepsilon_y\}}
\;\le\;
\frac{s(y, x)}{p_y / p_x}
\;\le\;
\frac{1+\max\{\varepsilon_x, \varepsilon_y\}}{1-\max\{\varepsilon_x, \varepsilon_y\}} = \kappa_{y, x}. 
\end{align}
Let $h(u)=u-1-\log u$, and note that  $h(\kappa_{y, x})\le\frac{(\kappa_{y, x}-1)^2}{2}=\frac{2\max\{\varepsilon_x, \varepsilon_y\}^2}{(1-\max\{\varepsilon_x, \varepsilon_y\})^2}$.
When \cref{kappa-bound} holds, we have 
\begin{align}
\label{eq:A2-5}
	p_x \Big( s(y, x) - \frac{p_y}{p_x}  - \frac{p_y}{p_x} \log \frac{ s(y, x)}{{p_y / p_x}} \Big)\leq  {8p_y\max \{\varepsilon_x^2, \varepsilon_y^2\}}  \leq \frac{24 \zeta \log(|\cX| / \delta)}{n }. 
\end{align} 
Combining \cref{eq:A2-1,eq:A2-2,eq:A2-4,eq:A2-5}, we conclude that with probability at least $1 - \delta$, we have 
\begin{align*}
	\cL_{\SE}(s, p) = &  \E_{x \sim p} \Big[ \sum_{y \in N(x)} \overrightarrow{Q}(y, x) \Big( s(y, x) - \frac{p_y}{p_x}  - \frac{p_y}{p_x} \log \frac{ s(y, x)}{{p_y / p_x}}\, \Big) \Big] \\
	\lesssim & \frac{ |\cX| \log_v |\cX| \zeta \log (n / \delta) \log (n \zeta)}{n} + \frac{ |\cX| \log_v |\cX| \zeta \log(|\cX| / \delta)}{n }, 
\end{align*}
where ``$\lesssim$'' hides a numerical constant.
The proof is done by integrating out $\delta$.

\subsection{Proof of Lemma \ref{lem:31}}
\label{proof:lem:31}

For the uniform discrete diffusion model, the single-token state space has size $v$, and the
tokenwise generator is
\[
Q_{\rm token}=\frac{1}{v}\mathbf 1 \mathbf 1^\top - I.
\]
Set
\[
P:=\frac1v \mathbf 1 \mathbf 1^\top.
\]
Then $P^2=P$, so $P$ is a projection, and
\[
Q_{\rm token}=P-I=-(I-P).
\]
Hence
\[
e^{tQ_{\rm token}}
=
e^{-t(I-P)}
=
I+\sum_{k=1}^\infty \frac{(-t)^k}{k!}(I-P)^k.
\]
Since $(I-P)^k=I-P$ for all $k\ge 1$, we obtain
\[
e^{tQ_{\rm token}}
=
I+\left(\sum_{k=1}^\infty \frac{(-t)^k}{k!}\right)(I-P)
=
I+(e^{-t}-1)(I-P).
\]
Therefore
\[
e^{tQ_{\rm token}}
=
P+e^{-t}(I-P)
=
e^{-t}I+(1-e^{-t})P.
\]
Writing this entrywise, for $a,b\in [v]$ we have
\[
K_t(a,b):=\bigl(e^{tQ_{\rm token}}\bigr)_{ab}
=
\begin{cases}
\displaystyle e^{-t}+\frac{1-e^{-t}}{v}
=
\frac{1+(v-1)e^{-t}}{v}, & a=b,\\[1.2ex]
\displaystyle \frac{1-e^{-t}}{v}, & a\neq b.
\end{cases}
\]
Let
\[
s_t:=\frac{1+(v-1)e^{-t}}{v},
\qquad
u_t:=\frac{1-e^{-t}}{v}.
\]

Now fix $x\in X$, and let $y\in N(x)$. Under the normalized uniform rate matrix, $x$ and
$y$ differ in exactly one coordinate. Let $i$ be the unique coordinate such that
$x_i\neq y_i$, so that
\[
x=(x_i,x_{-i}), \qquad y=(y_i,x_{-i}).
\]
Since the forward process evolves independently across coordinates,
\[
p_t(x)=\sum_{z\in X} p_0(z)\prod_{j=1}^d K_t(z_j,x_j).
\]
Writing $z=(z_i,z_{-i})$ and grouping according to the value $z_i=a$, we obtain
\[
p_t(x)
=
\sum_{a\in [v]}\sum_{z_{-i}\in [v]^{d-1}}
p_0(a,z_{-i})\,K_t(a,x_i)\prod_{j\neq i} K_t(z_j,x_j).
\]
Define
\[
w_a:=
\sum_{z_{-i}\in [v]^{d-1}}
p_0(a,z_{-i})\prod_{j\neq i} K_t(z_j,x_j).
\]
Then each $w_a\ge 0$, and
\[
p_t(x)=\sum_{a\in [v]} w_a K_t(a,x_i),
\qquad
p_t(y)=\sum_{a\in [v]} w_a K_t(a,y_i).
\]
Set
\[
A:=w_{x_i}, \qquad B:=w_{y_i}, \qquad
R:=\sum_{a\notin\{x_i,y_i\}} w_a.
\]
Using the explicit form of $K_t$, we get
\[
p_t(x)=s_tA+u_tB+u_tR,
\qquad
p_t(y)=u_tA+s_tB+u_tR.
\]
Therefore
\[
\frac{p_t(y)}{p_t(x)}
=
\frac{u_tA+s_tB+u_tR}{s_tA+u_tB+u_tR}.
\]
We claim that this ratio is at most $s_t/u_t$. Indeed,
\[
\frac{u_tA+s_tB+u_tR}{s_tA+u_tB+u_tR}\le \frac{s_t}{u_t}
\]
is equivalent to
\[
u_t(u_tA+s_tB+u_tR)\le s_t(s_tA+u_tB+u_tR),
\]
that is,
\[
u_t^2A+u_ts_tB+u_t^2R\le s_t^2A+s_tu_tB+s_tu_tR.
\]
After canceling the middle terms, this becomes
\[
0\le (s_t^2-u_t^2)A+u_t(s_t-u_t)R,
\]
which is true because $A\ge 0$, $R\ge 0$, and $s_t\ge u_t$.

Hence
\[
\frac{p_t(y)}{p_t(x)}\le \frac{s_t}{u_t}
=
\frac{\frac{1+(v-1)e^{-t}}{v}}{\frac{1-e^{-t}}{v}}
=
\frac{1+(v-1)e^{-t}}{1-e^{-t}}.
\]
Taking the supremum over all $x\in X$ and $y\in N(x)$ yields
\[
\zeta_t\le \frac{1+(v-1)e^{-t}}{1-e^{-t}}.
\]
This completes the proof.

\subsection{Proof of Lemma \ref{lem:32}}
\label{proof:lem:32}

For the absorbing rate matrix, recall that
\[
N(x)=\{y:\overrightarrow{Q}(y,x)>0\},
\]
so $y\in N(x)$ means that $x$ has one masked coordinate and $y$ is obtained by replacing that
mask with a non-mask token.

Fix $x\in X$ and $y\in N(x)$. Then there exist an index $i\in M(x)$ and a token $a\in [K]$
such that
\[
x_i=\mask,\qquad y_i=a,
\]
and $x_j=y_j$ for all $j\neq i$. Write
\[
U:=U(x), \qquad M:=M(x), \qquad u:=|U|,\qquad m:=|M|.
\]
Then
\[
U(y)=U\cup\{i\},\qquad M(y)=M\setminus\{i\},
\]
so that
\[
|U(y)|=u+1,\qquad |M(y)|=m-1.
\]

Let $\alpha_t=e^{-t}$. By the exact forward marginal formula for masking diffusion,
\[
p_t(x)=\alpha_t^u(1-\alpha_t)^m\,\Pr(X_U^0=x_U),
\]
and
\[
p_t(y)=\alpha_t^{u+1}(1-\alpha_t)^{m-1}\,
\Pr(X_{U\cup\{i\}}^0=(x_U,a)).
\]
Therefore
\[
\frac{p_t(y)}{p_t(x)}
=
\frac{\alpha_t}{1-\alpha_t}\cdot
\frac{\Pr(X_{U\cup\{i\}}^0=(x_U,a))}
{\Pr(X_U^0=x_U)}.
\]
The second factor is a conditional probability:
\[
\frac{\Pr(X_{U\cup\{i\}}^0=(x_U,a))}
{\Pr(X_U^0=x_U)}
=
\Pr(X_i^0=a\mid X_U^0=x_U)\le 1.
\]
Hence
\[
\frac{p_t(y)}{p_t(x)}
\le
\frac{\alpha_t}{1-\alpha_t}
=
\frac{e^{-t}}{1-e^{-t}}.
\]
Taking the supremum over all $x\in X$ and $y\in N(x)$ gives
\[
\zeta_t\le \frac{e^{-t}}{1-e^{-t}}.
\]
This completes the proof.

\end{document}